%% file: main.tex
\documentclass{article}

\usepackage{microtype}
\usepackage{graphicx}
\usepackage{subcaption}
\usepackage{booktabs}
\usepackage{mdframed}

\usepackage{hyperref}

\usepackage[accepted]{icml2026}

\usepackage{amsmath}
\usepackage{amssymb}
\usepackage{mathtools}
\usepackage{amsthm}
\usepackage{xcolor}

\usepackage[capitalize,noabbrev]{cleveref}

\theoremstyle{plain}
\newtheorem{theorem}{Theorem}[section]
\newtheorem{proposition}[theorem]{Proposition}
\newtheorem{lemma}[theorem]{Lemma}

\theoremstyle{definition}
\newtheorem{definition}[theorem]{Definition}
\newtheorem{assumption}[theorem]{Assumption}
\theoremstyle{remark}

\input{def}

\usepackage[textsize=tiny]{todonotes}

\icmltitlerunning{On Identifiability of Domain Translation}

\begin{document}

\twocolumn[
  \icmltitle{Domain Transfer Becomes Identifiable via a Single Alignment}

  \icmlsetsymbol{equal}{*}

  \begin{icmlauthorlist}
    \icmlauthor{Sagar Shrestha}{sch}
    \icmlauthor{Subash Timilsina}{sch}
    \icmlauthor{Hoang-Son Nguyen}{sch}
    \icmlauthor{Xiao Fu}{sch}

  \end{icmlauthorlist}

  \icmlaffiliation{sch}{School of Electrical Engineering and Computer Science, Oregon State University, Corvallis, Oregon, USA}

  \icmlcorrespondingauthor{Xiao Fu}{xiao.fu@oregonstate.edu}

  \icmlkeywords{Machine Learning, ICML}

  \vskip 0.3in
]

\printAffiliationsAndNotice{}

\begin{abstract}
Domain transfer (DT) maps source to target distributions and supports tasks such as unsupervised image-to-image translation, single-cell analysis, and cross-platform medical imaging. However, DT is fundamentally ill-posed: push-forward mappings are generally non-identifiable, as measure-preserving automorphisms (MPAs) preserve marginals while altering cross-domain correspondences, leading to content-misaligned translation. Recent work shows that MPAs can be eliminated by jointly transferring multiple corresponding source/target conditional distributions, but supervision signals labeling such conditionals are not always available in practice.
We develop an alternative route to DT identifiability. Under a structural sparsity condition on the Jacobian support pattern, we show that distribution matching together with a single paired anchor sample suffices to identify the ground-truth transfer---requiring substantially less supervision than prior approaches. To enable practical high-dimensional learning, we further propose an efficient Jacobian sparsity regularizer based on randomized masked finite differences, yielding a scalable surrogate without explicit Jacobian evaluation. Empirical results on synthetic and real-world DT tasks validate the theory.

\end{abstract}

\section{Introduction}

Domain transfer (DT) aims to convert {a source distribution to a target distribution, using samples from both distributions without aligned sample pairs~\cite{zhu2017unpaired, liu2022flow,courty2016optimal}.
A widely adopted idea for DT is to learn a neural network-represented transfer functions by matching probability distributions of the target domain and the transformed source domain~\cite{zhu2017unpaired,choi2020starganv2, de2024schrodinger}. Many popular learning paradigms in machine learning, including CycleGAN~\cite{zhu2017unpaired} and flow matching~\cite{liu2022flow}, use this insight.}

\subsection{Domain Transfer and Non-Identifiability}

DT enjoys empirical success in many applications, e.g., transfer learning~\cite{zhuang2020comprehensive}, domain adaptation~\cite{ganin2016domain, courty2016optimal,zhang2019bridging}, image-to-image translation~\cite{zhu2017unpaired, choi2020starganv2, de2024schrodinger}, language and speech translation~\cite{conneau2017word, lample2017unsupervised, baevski2021unsupervised}, etc. Nonetheless, it is well-known that DT can learn multiple solutions and may produce {undesired effects in applications}~\cite{galanti2018role,moriakov2020kernel, shrestha2024dimension}.
{The reason is subtle but critical: Assume that the target domain is generated by a ``content-preserving'' transfer function from the source domain. Such a function maps the source samples (e.g., handwritten digits) to the target domain (e.g., printed digits) without losing its semantic meaning (the identity of the digits like ``2'' and ``9'')---which is often the most useful one to learn in applications.
}
However, distribution matching does not uniquely recover the content-preserving transfer function,
due to the existence of \textit{measure-preserving automorphism} (MPA) \cite{galanti2018role,moriakov2020kernel, shrestha2024dimension}.
When the data distribution admits an MPA, which is generally the case for continuous distributions \cite{shrestha2024dimension}, multiple spurious solutions exists that can perfectly match the distribution but completely alter cross-domain correspondences---leading content-misaligned transfers (e.g., translating handwritten 2 to printed 9).

The non-identifiability of the content-preserving map has been long known in ML and had been mostly tackled by empirical solutions \cite{park2020contrastive, xu2022maximum, taigman2016unsupervised}.
A recent line of work provided a theory-backed solution, namely, \textit{diversified distribution matching} (DDM) \cite{shrestha2024dimension, shrestha2025diversified}, to establish transfer map identifiability: instead of matching only the full marginals of source and target distributions, one matches multiple pairs of cross-domain conditional distributions induced by auxiliary variables/attributes. DDM effectively eliminates MPAs and yields identifiability of the content-preserving map under reasonable conditions \cite{shrestha2024dimension, shrestha2025diversified}. However, this approach requires nontrivial side information (e.g., labels or attributes serving as conditional distribution allocation indicators) for all samples, which may be unavailable or costly to obtain in some DT applications.

\subsection{Contributions}
To address this challenge, this work puts forth an auxiliary information-reduced route to DT identifiability---requiring only as few as \emph{a single} paired ``anchor'' sample. Our detailed contributions are as follows:

{\it (i) DT Identifiability via a Single Alignment.}
We study a DT setting in which the target domain is generated from the source via a content-preserving transformation. By revealing a connection between DT and nonlinear unmixing, we establish identifiability of the ground-truth transfer function. Specifically, we reinterpret DT as a nonlinear unmixing problem in which source-domain samples act as visible latent components. We show that the presence of a single aligned source–target sample pair suffices to resolve the intrinsic ambiguities of nonlinear unmixing, enabling exact identification of the content-preserving transfer map. Building on recent advances in sparse nonlinear unmixing, we further show that such identifiability can be realized via imposing a mild structural sparsity condition on the Jacobian support of the transfer function. Compared to prior work that relies on richer auxiliary supervision (e.g., conditional distribution labels for each sample) \cite{shrestha2024dimension, shrestha2025diversified}, our result substantially reduces the amount of alignment information required---demonstrating that even a single aligned pair can be sufficient in practice.

{\it (ii) Efficient Implementation for High-Dimensional Problems.}
Existing Jacobian-sparsity–based nonlinear unmixing methods have primarily been validated in low-dimensional settings and rely on analytical sparsity-promoting regularizers (e.g., $\ell_1$ norm and variants) \cite{zheng2022identifiability}. However, directly enforcing Jacobian sparsity becomes prohibitive in high dimensions, as explicit Jacobian computation requires as many back-propagation passes as the input dimension.
To address this challenge, we introduce an efficient Jacobian sparsity regularizer based on random vector-masked finite differences, which provides a scalable surrogate without explicit Jacobian extraction. We show that this regularizer closely approximates the original Jacobian sparsity objective under reasonable conditions. When combined with adversarial distribution matching and a simple anchor loss, the resulting formulation yields a practical, end-to-end learning objective suitable for high-dimensional domain transfer.

We validate the proposed method empirically on synthetic and real-world data, demonstrating improved content alignment over standard distribution matching baselines.

\textbf{Notation.} Let $x$, $\x$, $\X$, and $\cal X$ denote a scalar, vector, matrix, and a set, respectively. Let $p_{\x}$ denote the \textit{probability density function} (pdf) of random vector $\x$. For a given function $\g$, let $\g_{\# p_{\x}}$ denote the push-forward of $p_{\x}$ via $\g$, informally the pdf of $\g(\x)$,  where $\x \sim p_{\x}$. Let ${\rm Id}$ denote the identity function with appropriate domain. $\| \cdot \|_0$ and $\| \cdot \|_1$ denote entry-wise $\ell_0$ and $\ell_1$ norms of given vector or matrix, respectively. Let ${\rm supp}({\cal X}) = \{(i,j) ~|~ \exists \X \in {\cal X}, X_{ij} \neq 0 \}$ and ${\rm supp}({\X}) = \{(i,j) ~|~ X_{ij} \neq 0 \}$ denote the indices of the support of set ${\cal X}$ (a set of matrices) and matrix $\X$, respectively. {For a set of index pairs $\cI \subseteq [D]\times [D]$, let $\cI_{i, :} = \{j ~|~ (i, j) \in \cI\}$ and $\cI_{:, j} = \{i ~|~ (i, j) \in \cI\}$.} Let $\f(\cX) = \{ \f(\x) ~|~ \x \in \cX \}$.

\section{Background}
\subsection{Domain Transfer and Identifiability}
Let $\x \sim p_{\x}$ and $\y \sim p_{\y}$ denote the random variables representing the data distribution of two simply connected open domains $\cX$ and $\cY$, respectively.
We consider the case where the $\bm y$-domain is generated from the $\x$-domain via a deterministic, smooth, and invertible ground-truth map $\g^\star:\cX \to \cY$ as a result of the following data generating process:
\begin{equation}\label{eq:data_model_main}
    \x \sim p_{\x},\qquad \y = \g^\star(\x),
\end{equation}
In push-forward notation, this is equivalent to $p_{\y} = [\g^\star]_{\#}p_{\x}$. In applications, this $\bm g^\star$ represents the desired content-preserving map.

Note that we are interested in the {\it unsupervised} domain transfer setting---that is, only samples of $p_{\x}$ and $p_{\y}$ are accessible, but the aligned pairs $(\x,\g^\star(\x))$ are not available.

Our goal is to provably identify $\bm g^\star$ under this setting via designing a learning criterion. Formally, we use the following definition:

\begin{definition}[Transfer Map Identifiability] A learning criterion (or a learning loss) identifies $\g^\star$ from $\x\sim p_{\x}$ and $\y\sim p_{\bm y}$ under \eqref{eq:data_model_main} if the optimal $\g:\cX \to \cY$ under the learning criterion satisfies
\begin{equation}\label{eq:udt_goal}
    \g(\x) = \g^\star(\x)\quad \text{a.e.}
\end{equation}
\end{definition}

{\bf DT via Distribution Matching.}
The core technique for DT is to learn a transfer map (or translation function) $\g:\cX \to \cY$ that matches the distribution of the source domain transformed by $\bm g$ and that of the target domain \cite{zhu2017unpaired, de2024schrodinger}---that is, solving the following learning criterion:
\begin{align}\label{eq:dist_matching}
    \text{find} ~ & \text{invertible}~\g \nonumber \\
    \text{s.t.} ~ & [\g]_{\#}p_{\x} = p_{\y}
\end{align}
A variety of distribution-matching tools have been used to enforce \eqref{eq:dist_matching}, including adversarial losses based on generative adversarial networks (GANs) \cite{goodfellow2014generative} and integral probability metrics such as maximum mean discrepancy (MMD) \cite{gretton2012kernel}. Recent works, such as flow matching \cite{liu2022flow, albergo2023stochastic} and schrodinger bridges \cite{de2024schrodinger, shi2023diffusion}, also consider dynamic distribution transport perspective.
Many methods further introduce regularizers to encourage invertibility~\cite{zhu2017unpaired}, structural consistency~\cite{park2020contrastive}, optimal transport \cite{kornilov2024optimal, bunne2023learning}, or other application-based desirable properties~\cite{xu2022maximum, xie2023unpaired}.  Nevertheless, explicit distribution matching remains the central ingredient across these approaches.

\paragraph{MPAs and Non-identifiability.} Using DT as backbone of many domain transfer applications has enjoyed empirical success (see, e.g., \cite{zhu2017unpaired, choi2020starganv2, bunne2023learning}). However, it is well-known that it does not identify the ground-truth function $\g^\star$ uniquely.
A key obstruction to identifiability in DT is the existence of non-trivial \textit{measure-preserving automorphisms} (MPAs), i.e., function $\neq {\rm Id}$ that does not alter the input distribution \cite{moriakov2020kernel, shrestha2024dimension}:
\begin{definition}[Measure-preserving automorphism (MPA)]\label{def:mpa}
A continuous bijective function $\m:{\cal X}\to{\cal X}$ is an MPA of $p_{\x}$ if $[\m]_{\# p_{\x}} = p_{\x}$.
\end{definition}
If $\m$ is a non-trivial MPA of $p_{\x}$ (i.e., $\bm m \neq {\rm Id}$), then $\g^\star\circ \m$ also satisfies $[\g^\star\circ \m]_{\# p_{\x}} = p_{\y}$. Hence Problem \eqref{eq:dist_matching} could return $\bm g=\g^\star \circ \m$ instead of $\g^\star$.

The existence of MPAs result in serious practical ramifications \cite{moriakov2020kernel,galanti2018role, shrestha2024dimension}: translation using $\g = \g^\star \circ \m$ could lead to \textit{content misalignment} between the input and translated samples. An example is shown in Fig.~\ref{fig:mnist_mpa}, where the digit ``7'' is translated into a different rotated digit by various UDT methods based on distribution matching only. Fig.~\ref{fig:mpa_illus} further illustrates how MPA could result in content misalignment in the above example. For simplicity, consider that the MNIST digit distribution $p_{\x}$ were a Gaussian $\cN(\mu, \sigma^2)$. Since $m: x \mapsto -x + 2 \mu$ is an MPA for the Gaussian $p_{\x}$, both $g^\star$ and $g^\star \circ m$ could be a solution of Problem \eqref{eq:dist_matching}. However, $g^\star$ and $g^\star \circ m$ output different rotated digits (``4'' and ``3'', respectively) for a given input digit ``4''.

\begin{figure}[t]
\centering
\includegraphics[width=0.8\linewidth]{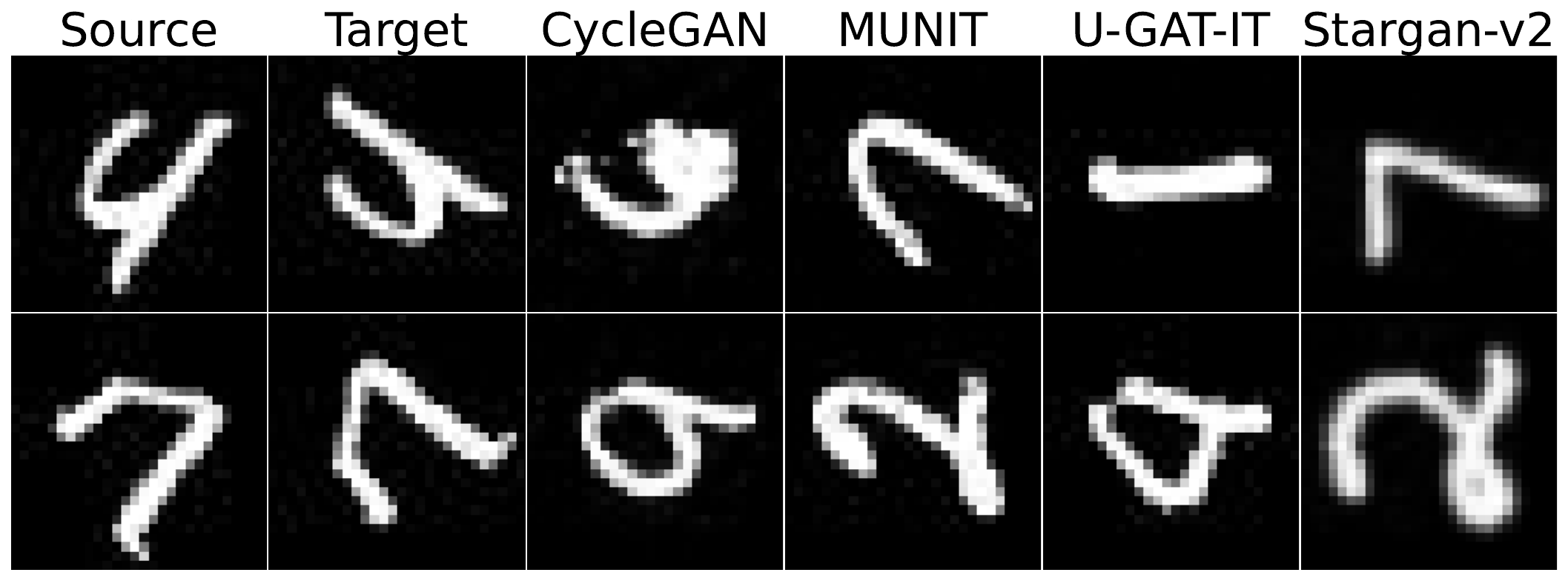}
\caption{Domain 1: MNIST digits, Domain 2: rotated MNIST digits. MPA issue causes content-misaligned translation as depicted by the change in digit identity after translation by existing methods.}
\label{fig:mnist_mpa}
\end{figure}

\begin{figure}
    \centering
    \includegraphics[width=\linewidth]{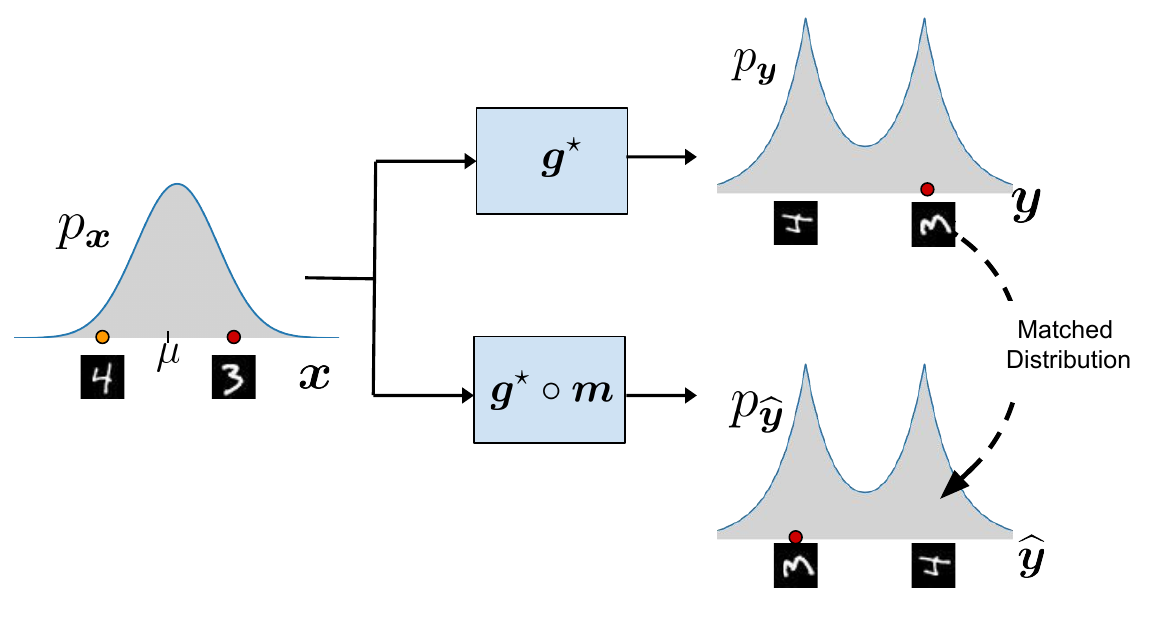}
    \caption{Illustration of how MPA results in content misalignment. If $p_{\x}$ were distributed $\cN(\mu, \sigma^2)$, $m = -x + 2 \mu$ would be an MPA. Hence $g^\star \circ m$ and $g^\star$ both are solutions to the distribution transport Problem \eqref{eq:dist_matching}. However, $g^\star \circ m$ results in content misalignment. }
    \label{fig:mpa_illus}
\end{figure}

Even worse, the literature \cite{shrestha2024dimension,moriakov2020kernel} has reported that MPA exists for all continuous distributions under mild conditions. This means that without further information or structural constraints, DT remains ill-posed and the $\bm g^\star$ is non-identifiable.

\subsection{Prior Art: Diversified Distribution Matching}
To address the non-identifiability challenge brought by MPA, recent works \cite{shrestha2024dimension, shrestha2025diversified} proposed to \emph{diversify} distribution matching, i.e., align multiple distribution pairs rather than only the global marginals as in \eqref{eq:dist_matching}.
In particular, they assume the availability of per-sample auxiliary information $u$ (e.g., a label, attribute, or side information) such that we can form cross-domain conditionals
$\{p_{\x|u = u_i}\}_{i=1}^{I}$ and $\{p_{\y|u=u_i}\}_{i=1}^I$.
They then enforce distribution matching for each conditional pair, i.e.,
\begin{align}\label{eq:ddm_objective}
    \text{find} ~ & \text{invertible } \g \nonumber \\
    \text{s.t.} ~ & [\g]_{\# p_{\x|u=u_i}} = p_{\y|u=u_i},\qquad \forall i \in [I],
\end{align}
The main insight established in \cite{shrestha2024dimension} is that MPAs that simultaneously preserve \emph{many} diverse conditionals become increasingly unlikely to exist---which assists establishing identifiability of $\bm g^\star$.

The DDM identifiability theory provides a practical and effective route to MPA elimination. However, DDM has a major drawback: it requires the availability of auxiliary information for all samples.
Specifically, each $\x$ and $\y$ must be annotated with the variable $u$ that identifies which conditional distributions $p_{\x\mid u}$ and $p_{\y\mid u}$ it is drawn from.
This auxiliary information may be unavailable or costly to obtain in many applications. For example, in single-cell analysis \cite{tong2020trajectorynet, saelens2019comparison}, it is unclear what kind of side information could be used as $u$ across various phases of cell development.

\subsection{Nonlinear Unmixing and DT}
To reduce the amount of side information,
our idea is to leverage a subtle but interesting connection between nonlinear unmixing and DT.

Nonlinear unmixing (e.g., \cite{hyvarinen1999nonlinear, hyvarinen2019nonlinear, khemakhem2020variational, lachapelle2022disentanglement, zheng2022identifiability}) aims at recovering latent components $\x$ from their observed nonlinear mixtures $\bm y=\bm g^\star(\bm x)$, where $\bm g^\star$ is an unknown invertible ``mixing'' function. There are many fundamental limitations of nonlinear unmixing. One of them is that $\x$ is often only identifiable up to permutation and component-wise invertible transformations \cite{hyvarinen1999nonlinear, khemakhem2020variational, von2024identifiable}. To be specific, let $\widehat{\x}$ be the recovered sources. Then the individual element of $\widehat{\x}$, denoted by $\widehat{x}_d$, is related to $\x$ by an index permutation function $\pi:[D]\to[D]$ and a component-wise invertible transformation $h_d:\bbR\to\bbR, d\in[D]$:
\begin{equation}
    \widehat{x}_d = h_d(x_{\pi(d)}),
\end{equation}
where $d\in[D]$ is the index of the source component. In matrix notation, let $\bm \Pi \in \bbR^{D\times D}$ be a permutation matrix, then $\widehat{\x} = \h(\bm \Pi \x)$, where $\h(\x) = [h_1(x_1), \ldots, h_D(x_D)]^T$. The fundamental limitations arise for a number of reasons; see \cite{von2021self} for discussion.

If one views the latent components and observed data in nonlinear unmixing as the source domain and target domain in DT, respectively, then the two problems exhibit similar mathematical forms. Then, using any nonlinear unmixing approaches could identify $\bm f(\bm y)=  \h(\bm \Pi \x)$, which is \textit{almost} the inverse of $\bm g^\star$. Nonetheless, the existence of $\bm h$ and $\bm \Pi$ is not acceptable to DT, as they distort translated features. Nonetheless, as in DT one also observes $\x\sim p_{\bm x}$ (i.e., $\x$ is not ``latent'' in DT), it renders opportunities to eliminate $\bm h$ and $\bm \Pi$ on top of regular nonlinear unmixing---this is the starting point of our approach.

\section{Nonlinear Unmixing-Assisted DT}
We first introduce two assumptions used in our main theorem.
The first assumption imposes the following structural condition on the Jacobian support pattern of the map $\g^{\star}$.

\begin{assumption}[A3.1 (Structural sparsity)]\label{ass:struct_sparsity_main}
For all $k\in\{1,\ldots,D\}$, there exists ${\cal C}_k\subseteq\{1,\ldots,D\}$ such that
\begin{equation}
    \bigcap_{i\in{\cal C}_k} {\cal F}_{i,:} = \{k\},
\end{equation}
where ${\cal F} = {\rm supp}(\J_{\g^\star}(\cX))$ and
$\J_{\g^\star}(\x)$ denotes the Jacobian matrix of $\g^\star$ at location $\x$.
\end{assumption}

\begin{figure}[t]
\centering
\includegraphics[width=\linewidth]{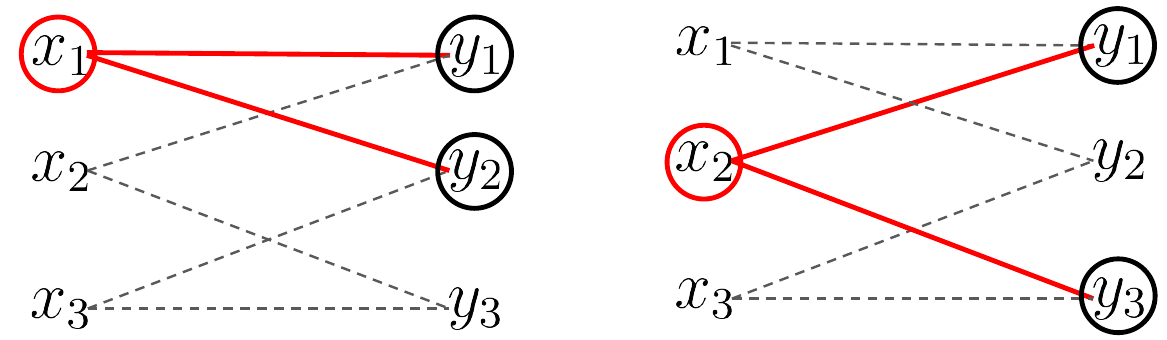}
\caption{Illustration of Assumption~\ref{ass:struct_sparsity_main} on the structure of $\g^{\star}$. For each input $\x_k$ (in red circle), one can find a subset of indices ${\cal C}_k$ (in black circles) corresponding to output coordinates that are influenced by $\x_k$, such that $x_k$ is the only common influence (illustrated by solid red lines) for these outputs.}
\label{fig:nonlinear_unmixing_dt}
\end{figure}

Assumption~\ref{ass:struct_sparsity_main} follows the recent development in nonlinear unmixing via sparse structure \cite{zheng2022identifiability}. It captures the idea that an input coordinate $x_i$ only influences upon a small subset of output coordinates.
We argue that the assumption suits for high-dimensional DT problems as well, a source feature might only affect a subset of target features as many feature transformations are relatively ``local'' (e.g., in image-to-image translation).
An example of such structure $\g^{\star}$ satisfying Assumption~\ref{ass:struct_sparsity_main} is shown in Figure~\ref{fig:nonlinear_unmixing_dt}.

Next, in addition to unpaired samples from $p_{\x}$ and $p_{\y}$, we assume access to at least one paired anchor sample.

\begin{assumption}[One paired anchor]\label{ass:anchor_sample_main}
There exists at least one paired sample $(\x^{(\ell)},\y^{(\ell)})$ such that $\x^{(\ell)} \sim p_{\x}$ and $\y^{(\ell)}=\g^\star(\x^{(\ell)})$.
\end{assumption}
This type of side information is arguably (much) easier to acquire compared to that in DDM \cite{shrestha2024dimension,shrestha2025diversified}, where each sample needs to be assigned to a conditional distribution.

\subsection{Proposed DT Criterion}
Under Assumptions~\ref{ass:struct_sparsity_main}--\ref{ass:anchor_sample_main}, we propose to learn $\g$ by the following criterion:
\begin{subequations}\label{eq:udt_via_nica_main}
\begin{align}
    \min_{\g}\quad & \bbE_{\x} \big\| \J_{\g}(\x)\big\|_{0} \label{eq:obj_l0} \\
    \text{s.t.}\quad
    & [\g]_{\#}p_{\x} = p_{\y}, \label{eq:pushforward_constraint}\\
    &{\g(\x^{(\ell)}) = \y^{(\ell)},~{\ell}\in {\cal E} }\label{eq:anchor_constraint_main}\\
    & \g \ \text{is invertible}, \label{eq:invertibility_main}
\end{align}
\end{subequations}
where ${\cal E}$ is the index set of aligned sample pairs.
Here $\|\cdot\|_{0}$ counts the number of nonzero entries.

Let $\widehat{\g}$ be any solution to \eqref{eq:udt_via_nica_main}. Let $\widehat{\cal F} = {\rm supp}(\J_{\widehat{\g}}(\cX))$ and ${\cal V} = {\rm supp}(\{\J_{\g^{-1} \circ \widehat{\g}}(\cX)\})$. Let $\cR_{\cal B}^D$ denote a subspace of $\bbR^D$ with all elements outside of ${\cal B} \subseteq [D]$ being zero. Then consider the following regularity assumption from \cite{zheng2022identifiability}:
\begin{assumption}\label{ass:regularity}
Suppose that for all $d \in [D]$, there exist $\{ \x^{(\ell)}\}_{\ell = 1}^{|{\cal F}_{d,:}|}$ and a matrix $\V \in \bbR^{D \times D}$ with ${\rm supp}(\V) = \cV$ such that
${\rm span}\left(\{ \J_{\g^\star} (\x^{(\ell)})_{d,:}\}_{\ell=1}^{|{\cal F}_{d,:}|}\right) = \bbR^D_{{\cal F}_{d,:}}$ and $[\J_{\g^\star}(\x^{(\ell)})\V]_{d,:} \in \bbR^D_{\widehat{\cal F}_{d,:}}$.
\end{assumption}
Assumption~\ref{ass:regularity} is a commonly used assumption in sparse nonlinear unmixing literature~\cite{zheng2022identifiability, zheng2023generalizing} to prevent pathological cases (e.g., all samples being limited to a degenerate subspace).

Using Problem~\eqref{eq:udt_via_nica_main}, we state the following identifiability theorem:

\begin{theorem}[Identifiability via anchored nonlinear unmixing]\label{thm:main_identifiability}
Consider the data model~\eqref{eq:data_model_main}. Suppose that Assumptions~\ref{ass:struct_sparsity_main}, \ref{ass:anchor_sample_main}, and \ref{ass:regularity} hold, {and that $|{\cal E}|\geq 1$}. Then with probability one,
\begin{equation}
    \widehat{\g} = \g^\star \qquad \text{a.e.}
\end{equation}
\end{theorem}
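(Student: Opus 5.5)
Our plan is to reduce the problem to the sparse nonlinear unmixing identifiability result of \cite{zheng2022identifiability}, and then use the observed source $\x$ together with the anchor constraint \eqref{eq:anchor_constraint_main} to remove the residual permutation and component-wise ambiguities.

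\textbf{Step 1 (reduction to an MPA).} Let $\widehat{\g}$ solve \eqref{eq:udt_via_nica_main} and define $\h := (\g^\star)^{-1}\circ\widehat{\g}:\cX\to\cX$. Both $\g^\star$ and $\widehat{\g}$ push $p_{\x}$ to $p_{\y}$ and are invertible, so $\h$ is an invertible map with $[\h]_{\#}p_{\x}=p_{\x}$, i.e., an MPA in the sense of Definition~\ref{def:mpa}. Since $\widehat{\g}=\g^\star\circ\h$, the chain rule gives
\begin{equation*}
\J_{\widehat{\g}}(\x)=\J_{\g^\star}(\h(\x))\,\J_{\h}(\x).
\end{equation*}
It therefore suffices to show that $\h={\rm Id}$ a.e.

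\textbf{Step 2 (sparsity forces a trivial Jacobian pattern).} Optimality of $\widehat{\g}$, with $\g^\star$ feasible, gives $\bbE\|\J_{\widehat{\g}}\|_0\le\bbE\|\J_{\g^\star}\|_0$. Following the argument of \cite{zheng2022identifiability}, we would proceed as follows. Assumption~\ref{ass:regularity} lets us use sample points whose Jacobian rows of $\g^\star$ span $\bbR^D_{{\cal F}_{d,:}}$. Combined with a counting argument over invertible $\J_{\h}$ (some permutation $\sigma$ has $(\J_{\h})_{\sigma(j),j}\ne0$ on the support), this shows $|\widehat{\cal F}|\ge|{\cal F}|$ up to $\sigma$, so equality holds, and that for each $(i,j)\in\cV$ one has ${\cal F}_{:,i}\subseteq\widehat{\cal F}_{:,\sigma(j)}$. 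Assumption~\ref{ass:struct_sparsity_main} then yields: if column $k$ of $\cV$ has two nonzeros, intersecting the rows in ${\cal C}_k$ produces a contradiction. Hence $\J_{\h}$ is a generalized permutation matrix with a fixed pattern, and since $\cX$ is connected, $\h(\x)=[h_1(x_{\pi(1)}),\dots,h_D(x_{\pi(D)})]^T$ with each $h_d$ strictly monotone. We expect this step to be the main obstacle: the pattern $\cV$ must be constant over $\cX$, and the ``with probability one'' must be justified by a genericity argument over the sampled points of Assumption~\ref{ass:regularity}. We would first try to import the corresponding lemma of \cite{zheng2022identifiability} essentially verbatim, applied to $\widehat{\g}$ in place of the learned mixing.

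\textbf{Step 3 (removing $\pi$ and $\h$).} Now $\h$ is a component-wise map composed with a permutation, and it preserves $p_{\x}$. Two facts are available. First, $\g^\star\circ\h$ must still have the minimal support ${\cal F}$, so the permutation must preserve the Jacobian pattern of $\g^\star$. By Assumption~\ref{ass:struct_sparsity_main} (each column $k$ is uniquely identified by $\bigcap_{i\in{\cal C}_k}{\cal F}_{i,:}$), this forces $\pi={\rm Id}$ on generic patterns. Second, the anchor gives $\h(\x^{(\ell)})=(\g^\star)^{-1}(\y^{(\ell)})=\x^{(\ell)}$, so each $h_d$ has a fixed point $x_d^{(\ell)}$. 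We then combine this with measure preservation: $h_d$ must preserve the corresponding marginal of $p_{\x}$. A monotone increasing map preserving a continuous one-dimensional marginal with a fixed point is the identity, since it equals $F_d^{-1}\circ F_d$. A decreasing one reverses the CDF and has a unique fixed point at the median. Because $\x^{(\ell)}$ is drawn from the continuous $p_{\x}$, its coordinate equals that median with probability zero, which excludes the decreasing case almost surely. This is where ``with probability one'' enters. We conclude $\h={\rm Id}$ a.e., hence $\widehat{\g}=\g^\star$ a.e.
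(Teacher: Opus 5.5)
Your Steps 1 and 2 follow the paper's route: import Theorem~1 of \cite{zheng2022identifiability} to reduce $\h=(\g^\star)^{-1}\circ\widehat{\g}$ to a permutation composed with a component-wise map. Your CDF treatment of the case $\pi={\rm Id}$ is correct and cleaner than the paper's lemmas: an increasing map is the identity, and a decreasing map has a unique fixed point at the median, which the anchor hits with probability zero. The gap is in how you remove the permutation.

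Your claim that $\g^\star\circ\h$ ``must still have the minimal support ${\cal F}$, so the permutation must preserve the Jacobian pattern'' does not follow. When $\J_{\h}$ is a generalized permutation matrix, $\J_{\widehat{\g}}(\x)=\J_{\g^\star}(\h(\x))\J_{\h}(\x)$ has exactly the column-permuted support of $\J_{\g^\star}$, so it has the same number of nonzeros for every $\pi$. The $\ell_0$ objective only constrains $|\widehat{\cal F}|$, never $\widehat{\cal F}$ itself. Sparsity is therefore blind to $\pi$, which is why permutation is an irreducible ambiguity in sparse nonlinear ICA.

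A concrete counterexample: take $D=2$, $\g^\star={\rm Id}$, and $p_{\x}$ uniform on $(0,1)^2$. Assumptions~\ref{ass:struct_sparsity_main} and \ref{ass:regularity} hold. The swap $\widehat{\g}(\x)=(x_2,x_1)$ is invertible, satisfies the push-forward constraint, and attains the same Jacobian $\ell_0$ as $\g^\star$. Only the anchor excludes it. Note also that ``generic patterns'' is not a hypothesis of the theorem. Even granting your premise it would be unnecessary, since Assumption~\ref{ass:struct_sparsity_main} already forces all columns of ${\cal F}$ to have distinct supports. Your Step 3(b) then relies on $\pi={\rm Id}$, because it treats $x^{(\ell)}_d$ as a fixed point of $h_d$, so the argument is incomplete.

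The missing idea is to use the anchor itself to rule out $\pi\neq{\rm Id}$, as the paper does in its zero-measure-set lemma. Pick $q$ with $\pi(q)\neq q$. The anchor constraint forces $x^{(\ell)}_q=h_q(x^{(\ell)}_{\pi(q)})$, so the anchor lies on the graph of a function in the $(x_{\pi(q)},x_q)$-plane. That set is Lebesgue-null, hence $p_{\x}$-null.

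Because $\widehat{\g}$ is chosen after the anchor is observed, you also need the union of these bad sets over all admissible $(\pi,\h)$ to be null. Your own CDF argument supplies this. The marginal condition $h_q(x_{\pi(q)})\stackrel{d}{=}x_q$ with $h_q$ strictly monotone forces $h_q\in\{F_q^{-1}\circ F_{\pi(q)},\,F_q^{-1}\circ(1-F_{\pi(q)})\}$. This leaves at most $2^D D!$ candidate pairs, so the union is a finite union of null sets. With that step added, your Step 3(b) completes the proof.
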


Theorem~\ref{thm:main_identifiability} establishes translation identifiability based on very light side information: instead of requiring labels or attributes for all samples as in DDM \cite{shrestha2024dimension,shrestha2025diversified}, we require {as few as a} single aligned sample. In practice, even under model mismatches and noise, using $1<|{\cal E}|\leq 10$ for fairly high-dimensional data (e.g., $128\times 128$ images) usually suffices to produce sensible results.

The complete proof of the theorem is provided in Appendix~\ref{app:proof_main_thm}. A brief sketch is as follows:

\begin{proof}
The proof consists of three steps.

{\it Step 1: Nonlinear unmixing yields a finite ambiguity class.}
Under Assumption~\ref{ass:struct_sparsity_main}, Jacobian sparsity minimization of~\eqref{eq:obj_l0} places $\widehat{\g}$ in a nonlinear unmixing regime. Existing nonlinear unmixing result in~\cite{zheng2022identifiability} implies that any minimizer must take the form
\begin{equation}\label{eq:unmixing_form}
    \widehat{\g}^{-1}(\y)= \h(\bm\Pi \x),~\text{where } \y=\g^\star(\x),
\end{equation}
where $\bm\Pi$ is a permutation matrix and $\h$ is component-wise invertible. Thus, Jacobian sparsity alone narrows the solution space to a structured
equivalence class.

{\it Step 2: Distribution matching turns the remaining ambiguity into MPAs.}
The constraint \eqref{eq:pushforward_constraint} enforces $\widehat{\y}=\widehat{\g}(\x)\stackrel{d}{=}\y$.
Combined with \eqref{eq:unmixing_form}, this implies that each coordinate mapping $h_i$ must be a one-dimensional translation from $x_{\pi(i)}$ to $x_i$, i.e.,
\begin{align*}
     \x \stackrel{d}{=} \widehat{\g}^{-1}(\y)
    \implies & \x \stackrel{d}{=} \h(\bm \Pi \x) \\
    \implies & x_i \stackrel{d}{=} h_i(x_{\pi(i)}),\quad i\in[D].
\end{align*}
Importantly, we show that the admissible choices of $h_i$ are limited (only a finite number of admissible functions that can satisfy $x_i \stackrel{d}{=} h_i(x_{\pi(i)})$).

{\it Step 3: A single anchor eliminates permutations and non-trivial MPAs almost surely.}
Finally, the anchor constraint \eqref{eq:anchor_constraint_main} requires $\widehat{\g}(\x^{(\ell)})=\y^{(\ell)}$, i.e.,
\begin{equation}
    \x^{(\ell)} = \h(\bm\Pi \x^{(\ell)}).
\end{equation}
We show that for any non-trivial coordinate-wise MPA, the set of fixed points is a singleton, and for any nontrivial permutation $\bm\Pi\neq\bm I$, the set of $\x$ satisfying $\h(\bm\Pi \x)=\x$ has measure zero under {$p_{\x}$}. Therefore, a randomly drawn anchor $\x^{(\ell)}$ rules out all $(\bm\Pi,\h)$ except $(\bm I,\mathrm{Id})$ with probability one, yielding $\widehat{\g}=\g^\star$ a.e.
\end{proof}

\section{Scalable Implementation}

\subsection{GAN-based learning objective for \texorpdfstring{\eqref{eq:udt_via_nica_main}}{(5)}}
We implement \eqref{eq:udt_via_nica_main} by parameterizing $\g$ via a neural network $\g_{\bm \theta}:{\cal X}\to{\cal Y}$, and optimizing it via adversarial distribution matching~\cite{goodfellow2014generative}, an anchor loss, and a carefully designed differentiable surrogate for the Jacobian sparsity objective.
Concretely, let $\d_{\bm \psi}$ denote the discriminator. Then, we optimize the following objective:
\begin{align}\label{eq:gan_impl}
\min_{\bm \theta}\ \max_{\bm \psi}\;
\cL_{\rm GAN}(\g_{\bm \theta},\d_{\bm \psi})
\;+\;\lambda_{\rm anch}\cL_{\rm anch}(\g_{\bm \theta}) \\
+\lambda_{\rm sp}\cL_{\rm sp}(\g_{\bm \theta}) + \lambda_{\rm inv}\cL_{\rm inv}(\g_{\bm \theta}), \nonumber
\end{align}
where $\cL_{\rm GAN}, \cL_{\rm anch}$, and $\cL_{\rm inv}$ enforces the constraints \eqref{eq:pushforward_constraint}, \eqref{eq:anchor_constraint_main}, and \eqref{eq:invertibility_main}, respectively, and $\cL_{\rm sp}$ corresponds to the objective \eqref{eq:obj_l0}.

Here, $\cL_{\rm GAN}$ is the standard GAN distribution matching loss \cite{goodfellow2014generative}:
\begin{align}\label{eq:gan_loss_impl}
& \cL_{\rm GAN}(\g_{\bm \theta},\d_{\bm \psi}) \;=\;\\
& \mathbb{E}_{\y\sim p_{\y}}[\log \d_{\bm \psi}(\y)] + \mathbb{E}_{\x\sim p_{\x}}[\log (1-\d_{\bm \psi}(\g_{\bm \theta}(\x)))]. \nonumber
\end{align}
The anchor loss $\cL_{\rm anch}$ is a simple regression form
\begin{equation}\label{eq:anchor_loss_impl}
\cL_{\rm anch}(\g_{\bm \theta}) \;=\; \big\|\g_{\bm \theta}(\x^{(\ell)})-\y^{(\ell)}\big\|_2^2.
\end{equation}
The $\cL_{\rm inv}(\g_{\bm \theta})$ term is an invertibility promoting regularization realized as follows:
\begin{align}
    \cL_{\rm inv}({\g_{\bm \theta}}) = \min_{\bm \alpha}~~\bbE_{\x \sim p_{\x}}\|\f_{\bm \alpha} (\g_{\bm \theta} (\x))   - \x  \|_1
\end{align}
Here, $\f_{\alpha}$ is another neural network that is optimized to reconstruct $\x$ from $\g_{\bm \theta}(\x)$. Under sufficiently expressive $\f_{\alpha}$, $\cL_{\rm inv}$ attains the minimum zero only when  $\g_{\bm \theta}$ is invertible, i.e., at $\f_{\alpha} = \g_{\bm \theta}^{-1}$; see \cite{zhu2017unpaired} for similar designs.

Lastly, the $\cL_{\rm sp}(\g_{\bm \theta})$ term is a computable surrogate that approximates $\bbE_{\x}\|\J_{\g_{\bm \theta}}(\x)\|_{0}$ which will be discussed in the next subsection.

\subsection{Finite-difference based Jacobian sparsity regularization}
The Jacobian $\J_{\g_{\bm \theta}}(\x)\in\bbR^{D\times D}$ is high-dimensional for large $D$.
Typically, extracting the full Jacobian for a neural network in general requires \emph{one backpropagation per row/column}, i.e., $O(D)$ backward passes per sample---which quickly becomes the computational bottleneck. Moreover, storing the jacobian requires $O(D^2)$ space, which can also limit the batch size during training.

\paragraph{Masked Finite-difference surrogate.}
When $D$ is small, it is possible to compute and store $\J$ using $D$ differentiations. Hence, we can use
\begin{equation}
\cL_{\mathrm{sp}}(\g_{\bm \theta})
\;=\; \bbE_{\x} \| \J_{\g_{\bm \theta}}(\x) \|_1,
\label{eq:jacobian_l1}
\end{equation}
to approximate the sparsity of the Jacobian.
However for large $D$, computing jacobian quickly becomes the bottleneck during training.
One way to avoid multiple backward passes, which are computationally more expensive than forward passes, is to use finite-difference based approximation.
Consider a one-hot mask $\e_d = [0, \dots, 0, 1, 0, \dots, 0]$ with $1$ at the $d$-th position and $0$ elsewhere. Then, the $d$-th column of the Jacobian is given by $\J_{\g_{\bm \theta}}(\x)_{:,d} = \frac{\g_{\bm \theta}(\x+\delta \e_d)-\g_{\bm \theta}(\x)}{\delta}$. Then,
\begin{equation}
    \left\| \J_{\g_{\bm \theta}}(\x) \e_d \right\|_{0} \approx \left\|\frac{\g_{\bm \theta}(\x+\delta \e_d)-\g_{\bm \theta}(\x)}{\delta}\right\|_{0}.
\end{equation}
Hence, $\left\| \J_{\g_{\bm \theta}}(\x) \right\|_{0} \approx \sum_{d=1}^D \left\| \J_{\g_{\bm \theta}}(\x) \e_d \right\|_{0}$.
Nonetheless, this approach also requires $D$ forward passes, which is still computationally expensive for large $D$.
The issue is that perturbation to $\x$ inside $\g_{\bm \theta}(\x + \delta \e_d)$ is applied one dimension at a time, which requires $D$ forward passes to evaluate the Jacobian sparsity, raising complexity concerns.

To avoid repeated forward passes, we explore an alternative in which we apply the perturbation to multiple dimensions at once, i.e., consider a random mask $\z \in \bbR^D$, such that $\|\z\|_0 = S \ll D$. Now consider the following alternative finite difference approximation:
\begin{equation}\label{eq:jacobian_l0_approx_masked}
    \left\| \J_{\g_{\bm \theta}}(\x) \z \right\|_{0} \approx \left\| \frac{\g_{\bm \theta}(\x+\delta \z)-\g_{\bm \theta}(\x)}{\delta} \right\|_{0}.
\end{equation}
If we minimize $\|\J_{\g_{\bm \theta}}(\x) \z \|_{0}$ instead of $\|\J_{\g_{\bm \theta}}(\x) \e_d \|_{0}$, we are enforcing sparse changes in the input to $\g_{\bm \theta}$ to cause sparse changes to its output. This is intuitively linked to the Jacobian sparsity.

Interestingly, we can also show that this masked finite-difference is a provable proxy for the Jacobian sparsity in upper/lower bounding sense. We consider a sparse Gaussian probe, i.e., $z_i \sim \cN(0, 1)$ when $\z_i \neq 0$, as the perturbation. In practice, other continuous distributions can also be used for $z_i$.
\begin{proposition}\label{thm:random-probe-l0}
    Let $\J \in \bbR^{D \times D}$. For each row $d\in\{1,\dots,D\}$, define the support
    ${\cal T}_d := \{ j \in \{1,\dots,D\} : \J_{dj} \neq 0 \}$ and its size $T_d := |{\cal T}_d|$ and $T = \max_d T_d$.
    Fix an integer $1 \le S \le D$, and draw a random probe $\r \sim {\rm Unif}({\cal R}_{D, S})$, where ${\cal R}_{D, S} = \{ \r \in \{0, 1\}^D ~|~ \|\r\|_0 = S \}$ is the set of all masks with exactly $S$ elements $=1$ and others 0. Let $\z = \r \odot \bm\epsilon$, where $\bm\epsilon \sim \mathcal N({\bm 0},{\bm I}_D)$.
    Define
    \begin{equation}
    \q(\J) = \frac{D}{S}\bbE_{\z} \|\J \z \|_0
    \end{equation}
    Then $\q(\J)$ satisfies the following bound
    \begin{equation}\label{eq:U-sandwich}
    \|\J\|_0 \;\ge\; \q(\J)
    \;\ge\;
    \left(1-\frac{(S-1)(T-1)}{2(D-1)}\right)\,\|\J\|_0.
    \end{equation}
\end{proposition}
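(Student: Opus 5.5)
Write $\|\J\z\|_0=\sum_{d=1}^D \mathbb{1}\{\J_{d,:}\z\neq 0\}$. By linearity of expectation, $\q(\J)=\frac{D}{S}\sum_d \Pr(\J_{d,:}\z\neq 0)$, so the whole statement reduces to a row-by-row computation.

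Fix a row $d$ and condition on the mask $\r$. The scalar $\J_{d,:}\z=\sum_{j\in{\cal T}_d} J_{dj} r_j \epsilon_j$ is a Gaussian with variance $\sum_{j\in{\cal T}_d} J_{dj}^2 r_j$. This variance is strictly positive exactly when $\r$ hits ${\cal T}_d$, since every $J_{dj}\neq 0$ on ${\cal T}_d$. A nondegenerate Gaussian is zero with probability zero. Hence, almost surely in $\bm\epsilon$,
\[
\Pr(\J_{d,:}\z\neq0)=\Pr_{\r}\big(\r_{{\cal T}_d}\neq \bm 0\big)=1-\binom{D-T_d}{S}\Big/\binom{D}{S}=:P_d .
\]
Then $\q(\J)=\frac{D}{S}\sum_d P_d$, and it suffices to show, for each $d$ with $T_d\ge1$,
\[
T_d\Big(1-\tfrac{(S-1)(T_d-1)}{2(D-1)}\Big)\le \tfrac{D}{S}P_d\le T_d .
\]
Rows with $T_d=0$ contribute zero to both sides. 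Summing over $d$ and using $T_d\le T$, so that $(1-\frac{(S-1)(T_d-1)}{2(D-1)})\ge(1-\frac{(S-1)(T-1)}{2(D-1)})$, gives \eqref{eq:U-sandwich}, because $\sum_d T_d=\|\J\|_0$.

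Both bounds come from inclusion–exclusion (Bonferroni) on the events $A_j=\{r_j=1\}$, $j\in{\cal T}_d$, since $P_d=\Pr(\bigcup_{j\in{\cal T}_d}A_j)$. The marginals are $\Pr(A_j)=S/D$, and for $j\neq k$ the pairwise probabilities are $\Pr(A_j\cap A_k)=\frac{S(S-1)}{D(D-1)}$. The union bound gives $P_d\le T_d S/D$, which is the upper bound. The second Bonferroni inequality gives
\[
P_d\ge T_d\frac{S}{D}-\binom{T_d}{2}\frac{S(S-1)}{D(D-1)},
\]
and multiplying by $D/S$ yields $T_d\big(1-\frac{(T_d-1)(S-1)}{2(D-1)}\big)$, which is exactly the required lower bound.

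The only step needing care is the almost-sure nonvanishing argument. I handle it by conditioning on $\r$ and using the fact that a nondegenerate Gaussian has no atom at $0$, with the union over the finitely many masks. The rest is the sampling-without-replacement pairwise probability and the second-order Bonferroni inequality, both of which are standard.
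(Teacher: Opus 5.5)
Your proposal is correct and follows the same route as the paper's proof. Both reduce to per-row probabilities by linearity of expectation, then condition on the mask so that $(\J\z)_d$ is a Gaussian that is nonzero almost surely exactly when the mask hits ${\cal T}_d$, and finish with the union bound and second-order Bonferroni using $\Pr(A_j)=S/D$ and $\Pr(A_j\cap A_k)=\frac{S(S-1)}{D(D-1)}$. The differences are cosmetic: you apply $T_d\le T$ row by row before summing, whereas the paper first sums and then bounds $\sum_d\binom{T_d}{2}\le\frac{T-1}{2}\|\J\|_0$; and your exact hypergeometric formula for $P_d$ is never needed.
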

The proof is given in Appendix~\ref{app:proof-random-probe-l0}.
Proposition~\ref{thm:random-probe-l0} shows that $\q(\J)$ closely approximates $\|\J\|_0$ when $(S-1)(T-1) \ll D$. Hence, selecting sparse enough random probe $\z$ for sufficiently sparse high-dimensional Jacobian $\J$ allows us to approximate $\|\J\|_0$ using $\q(\J)$ with high accuracy.
Note that when $S = 1$, $\|\J\z\|_0 = \|\J_{:,d}\|_0$ for some $d \in [D]$. This is equivalent to the standard finite-difference approximation which requires $\cO(D)$ number of $\z$ samples for accurate approximation. However, when $S>1$, we need fewer $\z$ samples to approximate $\|\J\|_0$. This is because $\z$ considers multiple columns of $\J$ at once resulting in lower variance estimation than single column-wise estimation (see Appendix~\ref{sec:ablation_study_jacobian_sparsity} for more details).

Therefore for large $D$, we use the following finite difference $\ell_1$ approximation of $\q(\J)$ as the sparsity regularizer:
\begin{align}\label{eq:masked_finite_diff}
    \cL_{\rm sp}(\g_{\bm \theta}) = \bbE_{\x, \z} \left\| \frac{\g_{\bm \theta}(\x+\delta \z)-\g_{\bm \theta}(\x)}{\delta} \right\|_{1}
\end{align}
Note that the constant $D/S$ can be absorbed by $\lambda_{\rm sp}$.

\section{Numerical Experiments}
We validate our theoretical claims on 2D synthetic data and image-to-image translation (MNIST to rotated MNIST and Edges to rotated shoes) tasks.

\subsection{2D synthetic data}
We generate samples from the two domains $\x \in \bbR^2$ and $\y \in \bbR^2$ as follows:
\begin{align}
    y_1 &\sim {\rm Unif}[-1,1], \quad y_2 \sim {\rm Unif}[-1,1] + 0.5 y_1 \nonumber \\
    \x  &= t\cos(\A \y) + \A \y, \label{eq:synthetic_data_model}
\end{align}
where $\A \in \bbR^{2\times 2}$ is a permutation matrix other than the identity, and $t \sim {\rm Unif}[0.3, 0.5]$ is a constant scaling factor. Here $\y \mapsto \x$ in \eqref{eq:synthetic_data_model} denotes the ${\g^\star}^{-1}$ map.

\textbf{Metrics:} We use the translation error (TE) to measure the similarity between target and translated samples. Per-sample TE is defined as follows:
\begin{equation}
    TE = \frac{1}{\sqrt{D}} \left\| \widehat{\g}_{\bm \theta}(\x) - \y \right\|_2,
\end{equation}
where $\x$ and $\y$ are the source and target samples, respectively. We report the mean and standard deviation of TE over all samples.

\textbf{Setting:} We use a 2-layer MLP with 32 hidden dimensions and leaky ReLU activation to represent $\g_{\bm \theta}$. We use 1 anchor sample with $\lambda_{\rm sp} = 0.1$, and $\lambda_{\rm inv} = 1.0$. Since it is a low dimensional setting, we use Eq.\eqref{eq:jacobian_l1} as $\cL_{\rm sp}$. Additional settings are provided in the appendix.

\textbf{Results:} Fig.~\ref{fig:synthetic_illus} shows the scatter plots of the samples from (a) $\x$, (b) $\y$, (c) $\widehat{\g}_{\bm \theta}(\x)$ when $\lambda_{\rm anch} = 0$ (i.e., without any anchor paired sample), and (d) $\widehat{\g}_{\bm \theta}(\x)$ when $\lambda_{\rm anch} = 1$. The samples are color coded such that $\x$, $\y = \g^{\star}(\x)$, and $\widehat{\y} = \widehat{\g}(\x)$ share the same color. For the translation to be correct, we expect $\widehat{\y} = \y$. Nonetheless, in Fig.~\ref{fig:synthetic_illus} (b) and (c), we observe that the samples are not aligned with the target samples $\y$. The TE of $0.83$ is also quite high. This indicates that the translation is not correct although the supports of $\widehat{\y}$ and $\y$ are matched due to successful distribution matching. But after using a single anchor in Fig.~\ref{fig:synthetic_illus} (d), we observe that the samples are aligned with the target samples $\y$, and the TE is reduced to $0.17$.

\begin{figure}[t]
    \centering
    \includegraphics[width=\linewidth]{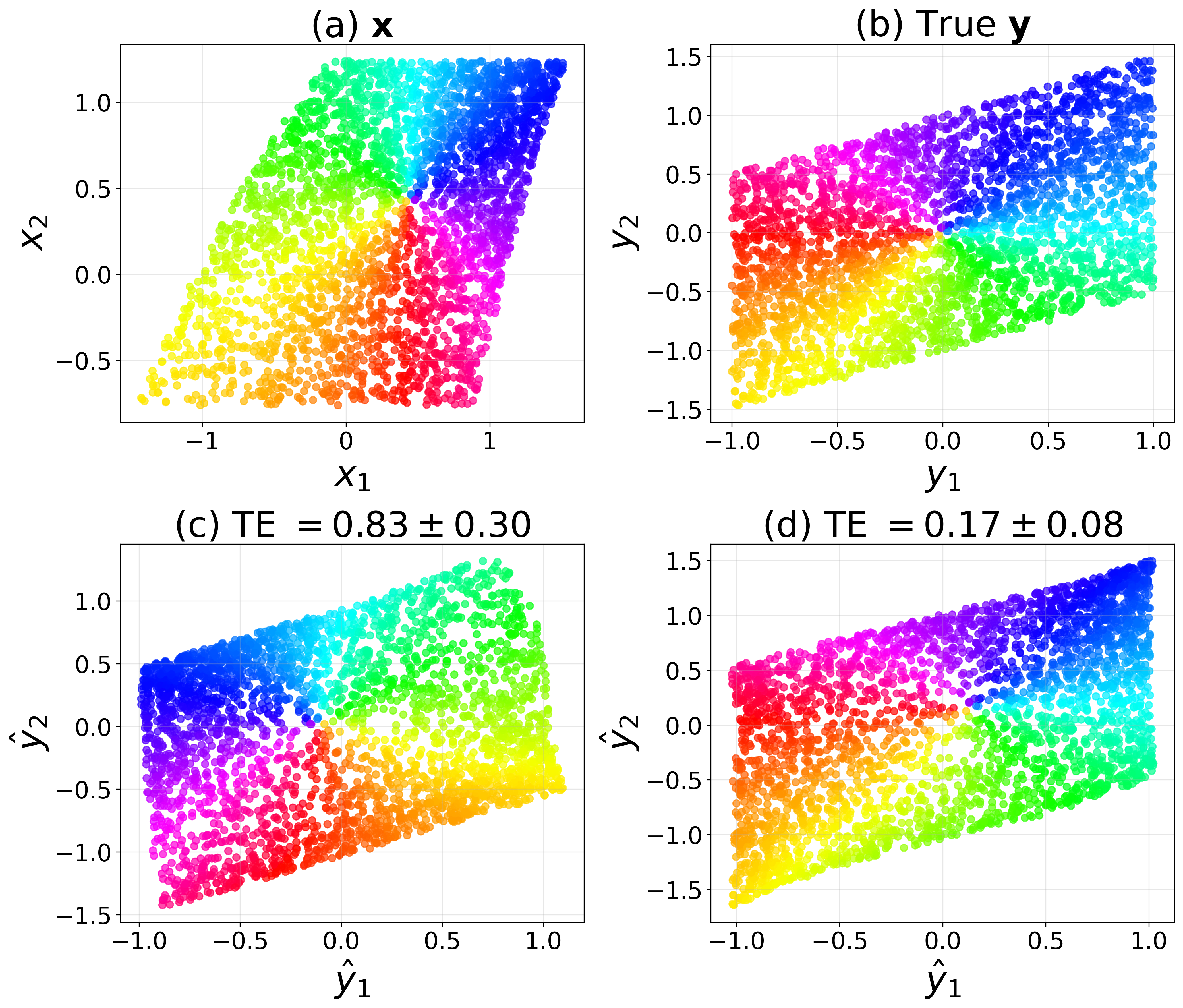}
    \caption{Scatter plot of (a) source samples (b) target samples, (c) translated samples when $\lambda_{\rm anch} = 0$, and (d) translated samples when $\lambda_{\rm anch} = 1$. Corresponding samples $\x$, $\y=\g^\star(\x)$ and $\widehat{\y} = \widehat{\g}(\x)$ are coded with the same color.}
    \label{fig:synthetic_illus}
\end{figure}

\begin{figure}[t]
    \centering
    \includegraphics[width=0.9\linewidth]{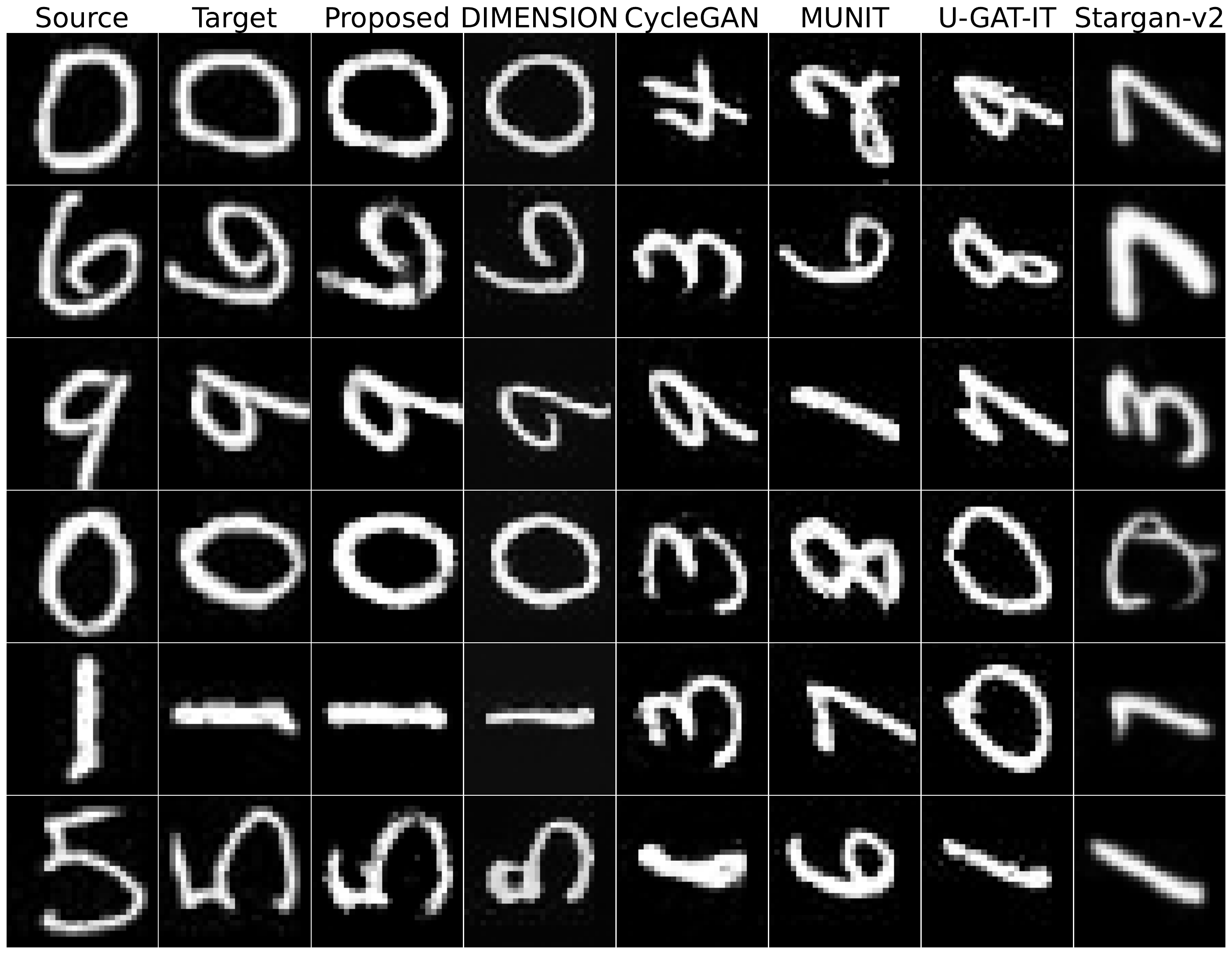}
    \caption{Result of translation from MNIST digits to rotated MNIST digits for all methods.}
    \label{fig:mnist_all}
\end{figure}

\subsection{Image-to-image translation}\label{sec:image_to_image_translation_main}
We evaluate the proposed method on challenging image to image translation tasks. Since these are high dimensional settings, we use masked finite-difference in Eq.~\eqref{eq:masked_finite_diff} for the sparsity regularizer.

{Note that the structural sparsity Assumption~\ref{ass:struct_sparsity_main} also has minor architectural implications, particularly for normalization layers: standard instance normalization couples all spatial locations within a channel and thus precludes a sparse Jacobian. We therefore replace instance normalization with channel-only normalization in our image experiments. We did not observe any degradation from this replacement, and most other architectural choices remain compatible.}

\textbf{Baselines:} We compare the proposed method with various DT methods: \texttt{DIMENSION}~\cite{shrestha2024dimension}, {\texttt{CUT}~\cite{park2020contrastive},} \texttt{CycleGAN}~\cite{zhu2017unpaired}, \texttt{U-GAT-IT}~\cite{kim2019ugatit}, \texttt{UNIT}~\cite{liu2017unit}, \texttt{MUNIT}~\cite{huang2018multimodal}, \texttt{StarGAN-v2}~\cite{choi2020starganv2}, \texttt{Hneg-SRC} \cite{jung2022exploring}, \texttt{GP-UNIT}~\cite{yang2023gp}, \texttt{OverLORD} \cite{gabbay2021scaling}, \texttt{ZeroDIM} \cite{gabbay2021image}.

\subsubsection{MNIST digits to rotated MNIST digits}
We use MNIST digits as the first domain $p_{\x}$ and 90 degrees rotated MNIST digits as the second domain $p_{\y}$. It is clear that the translation simply needs to rotate the image by 90 degrees. However, this task is challenging for methods based only on distribution matching because of the existence of MPA, which results in digit identity-misaligned translation.

\textbf{Settings:} We use a randomly sampled single anchor pair for the proposed method. We use $\lambda_{\rm sp} = 0.01$, and $\lambda_{\rm anch} = 1.0$. Additional settings and baselines are provided in the appendix.

\textbf{Results:} Fig.~\ref{fig:mnist_all} shows the result of translation for all methods. We observe that the proposed method and \texttt{DIMENSION}~\cite{shrestha2024dimension} successfully fix the content misalignment issue. Note that \texttt{DIMENSION} uses digit label as the side information to diversify distribution matching. Arguably, a single anchor sample is more easily available than labels for all samples in practice. Table~\ref{tab:image_translation_results} shows the TE attained by various methods for this task. One can see that the proposed method achieves the lowest TE among all methods.

\subsubsection{Edges to Rotated Shoes}
We use the popular image translation dataset Edges vs Shoes. However, to make the task more challenging, we intentionally rotate the shoes by 90 degrees as in~\cite{shrestha2024dimension}. This simple rotation makes the content-misalignment issue appear more prominently in existing methods.

\begin{figure*}[t]
    \centering
    \includegraphics[width=0.95\linewidth]{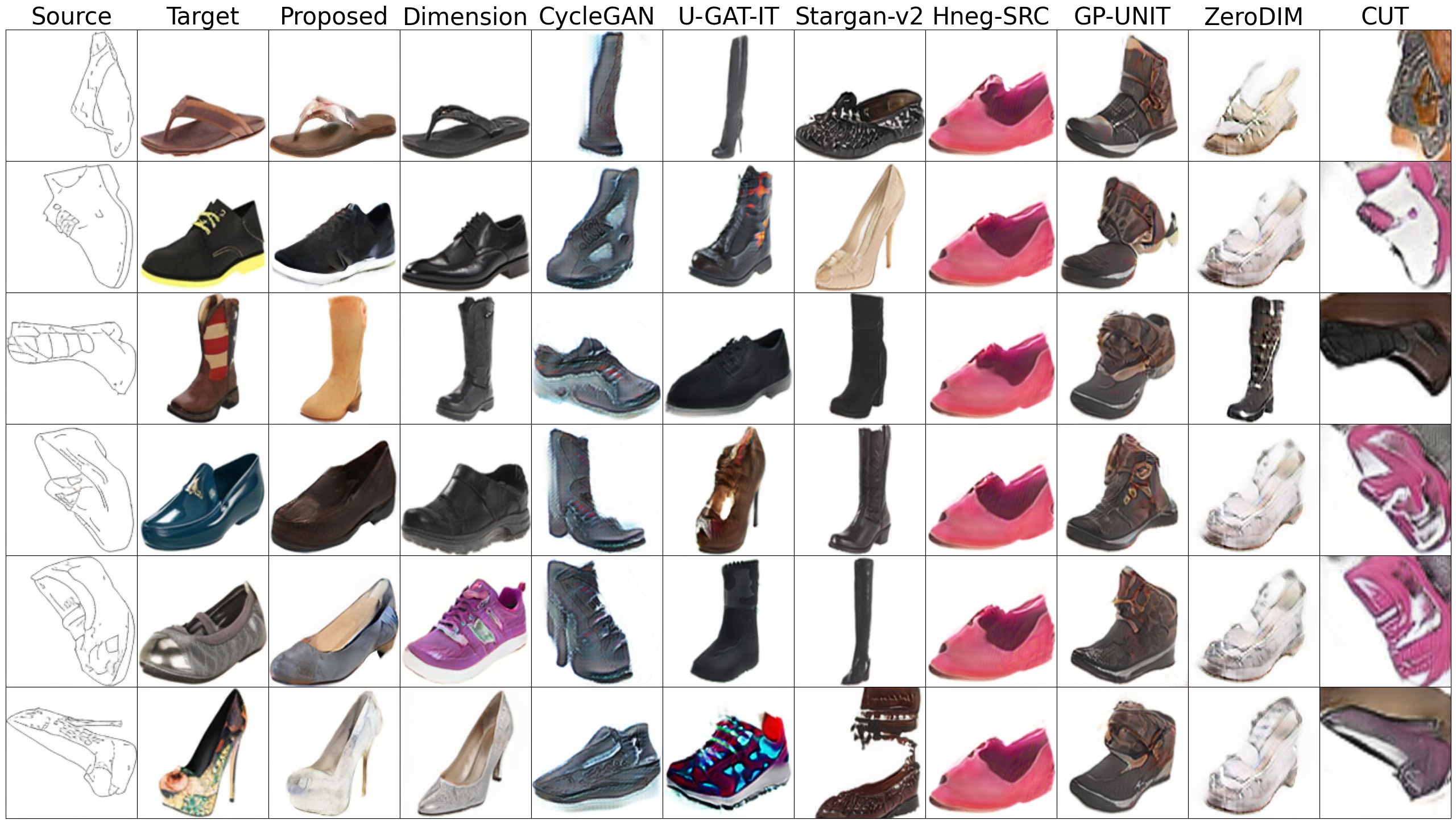}
    \caption{{Result of translation from Edges to rotated shoes for all methods.}}
    \label{fig:shoes_translation}
\end{figure*}

\begin{table}[t]
    \centering
    \caption{Quantitative alignment results for image translation tasks.``E'', ``rS'', ``M'', and ``rM'' denote Edges, rotated Shoes, MNIST, and rotated MNIST, respectively.}
    \resizebox{0.75\linewidth}{!}{
    \begin{tabular}{lcc}
    \toprule
    Method & LPIPS$(\downarrow)$ & TE$(\downarrow)$ \\
    \cmidrule(lr){2-2}\cmidrule(lr){3-3}
    & $E\rightarrow rS$ & $M\rightarrow rM$ \\
    \midrule
    Proposed      & $0.32 \pm 0.07$        & $\mathbf{0.20 \pm 0.06}$  \\
    DIMENSION     & $\mathbf{0.29\pm0.06}$ & $0.35\pm0.09$ \\
    {CUT}      & {$0.65\pm0.10$}          & {$0.81\pm0.19$} \\
    CycleGAN      & $0.65\pm0.03$          & $0.60\pm0.10$ \\
    U-GAT-IT      & $0.56\pm0.05$          & $0.62\pm0.11$ \\
    UNIT          & $0.49\pm0.03$          & $0.65\pm0.10$ \\
    MUNIT         & $0.50\pm0.03$          & $0.64\pm0.10$ \\
    StarGAN-v2    & $0.39\pm0.05$          & $0.66\pm0.11$ \\
    Hneg-SRC      & $0.45\pm0.06$          & --- \\
    GP-UNIT       & $0.49\pm0.08$          & --- \\
    OverLORD      & $0.43\pm0.06$          & --- \\
    ZeroDIM       & $0.38\pm0.06$          & --- \\
    \bottomrule
    \end{tabular}
    }
    \label{tab:image_translation_results}
    \vspace{-.5cm}
\end{table}

\textbf{Setting:} For complex high-dimensional distributions like images, we observed that we require more than one anchor sample in practice. For this more complex task, we use 10 anchor samples for the proposed method. {This gap from the single-anchor sufficiency in Theorem~\ref{thm:main_identifiability} can be attributed to two facts: (i) the theoretical assumptions may not be exactly met by complex real-world distributions (e.g., the Jacobian is only approximately sparse), and (ii) the nonconvex min-max problem in Eq.~\eqref{eq:gan_impl} cannot be solved exactly/optimally in practice. Adding a few extra anchors compensates for these mismatches without altering the spirit of the theory.}
Note that \texttt{DIMENSION} is a DDM-based approach that requires supervision. Here, DIMENSION used shoe type (sandal, boot, shoe, heels) as the side information.
Additional settings are described in the appendix.

\textbf{Results:} Table~\ref{tab:image_translation_results} shows the LPIPS attained by various methods for this task. LPIPS measures the perceptual similarity between the translated and target images using pretrained vision backbone. One can see that the proposed method achieves competitive LPIPS compared to \texttt{DIMENSION}. But the latter used substantially more supervision. {An ablation on the number of anchors $|\cE| \in \{1, 2, 5, 10\}$ for this task is provided in Appendix~\ref{app:single_vs_multi_anchor} (Table~\ref{tab:anchor_count_e2s}).}

{\subsection{Single-Cell Sequence Alignment}\label{sec:single_cell_main}
To further validate the proposed framework on a real-world scientific application, we consider single-cell sequence alignment, a core task in computational biology and AI4Science~\cite{yang2021multi, eyring2024unbalancedness, amodio2018magan}. Single-cell measurement processes are destructive, so paired measurements across modalities (e.g., RNA-seq and ATAC-seq) on the same cell are difficult to obtain. The goal is therefore to translate between modalities using mostly unpaired data.

\textbf{Setting:} We focus on translation between ATAC-seq and RNA-seq measurements, using human lung adenocarcinoma A549 cells from~\cite{cao2018joint}. The dataset contains 1{,}874 samples, split into 1{,}534 training and 340 testing samples following~\cite{yang2021multi}. We apply TF-IDF preprocessing on the count data (fitted on the training set). Following~\cite{yang2021multi}, we use K-NN accuracy between translated and target samples as the evaluation metric. The translation function is a single linear layer trained for 150 epochs with batch size 64 and learning rate $0.002$; remaining settings follow~\cite{yang2021multi}.

\textbf{Baseline:} We compare against the cross-modal autoencoder (CM-AE) baseline of~\cite{yang2021multi}.

\textbf{Results:} Figure~\ref{fig:single_cell} reports K-NN accuracy of RNA-seq to ATAC-seq translation. The proposed method substantially outperforms CM-AE across all $k$. Importantly, a single paired anchor only translates into a meaningful gain when combined with the Jacobian sparsity regularizer, mirroring the theoretical insight that anchor and sparsity together drive identifiability.

\begin{figure}[t]
    \centering
    \includegraphics[width=\linewidth]{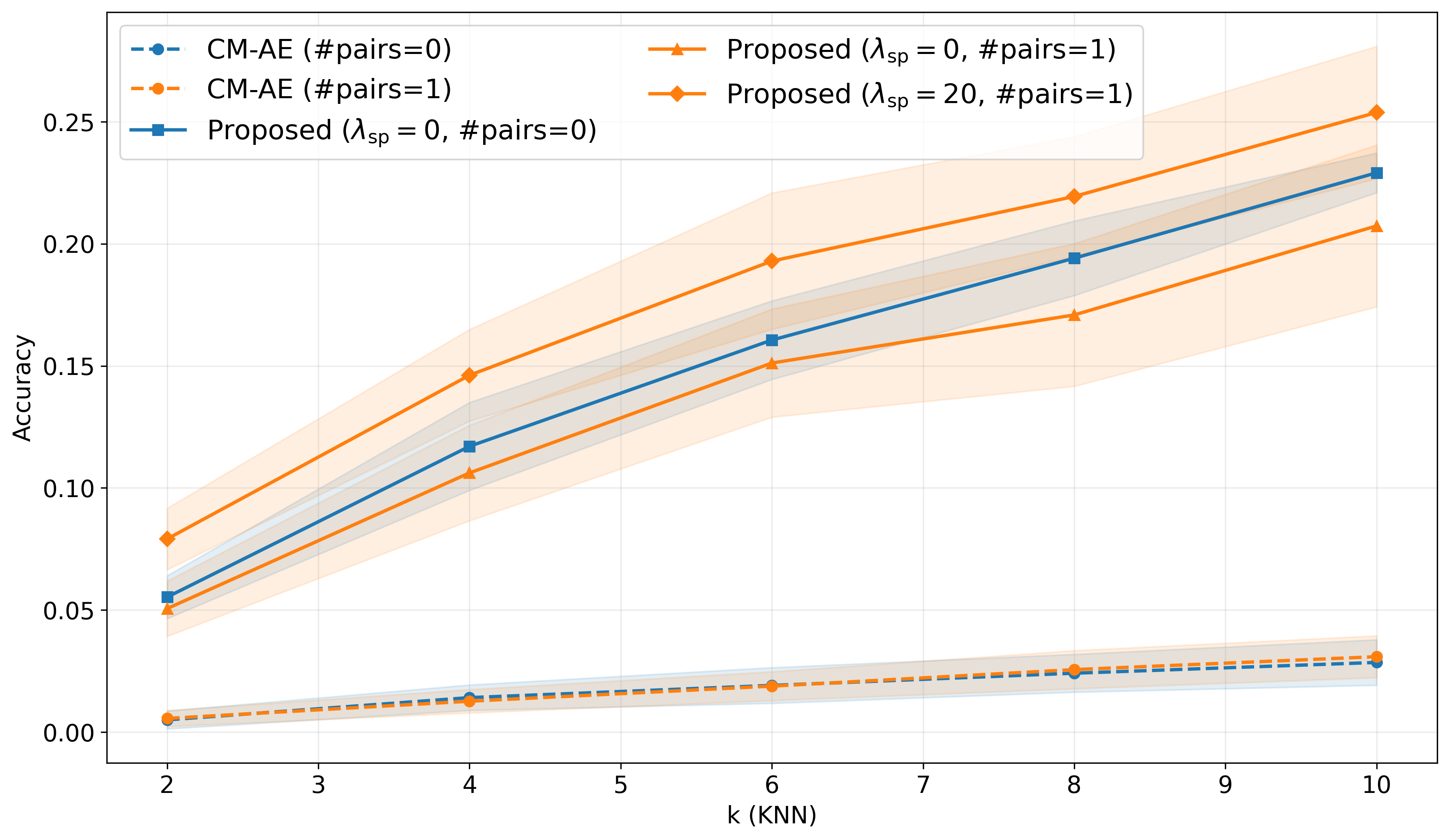}
    \caption{K-NN accuracy of RNA-seq to ATAC-seq translation on the A549 single-cell dataset~\cite{cao2018joint}, comparing the proposed method to CM-AE~\cite{yang2021multi}. The proposed method clearly benefits from anchor matching combined with sparsity regularization.}
    \label{fig:single_cell}
\end{figure}}

\subsection{Ablation Studies}
Ablation study for analysing the effect of the various loss terms (e.g., Jacobian sparsity regularization, anchor matching regularization, and distribution matching) is provided in Appendix~\ref{sec:ablation_study}. Similarly, we also provide an ablation study on the efficacy of the proposed Jacobian sparsity regularizer in Appendix~\ref{sec:ablation_study_jacobian_sparsity}.

{For reproducibility, the source code is publicly available at: {\small\url{https://github.com/shresthasagar/Identifiable_DT_anchor.git}}

\section{Conclusion}
This work revisits the ill-posed nature of domain transfer. The core difficulty stems from the existence of measure-preserving automorphisms (MPAs)—mappings that leave a distribution unchanged—yet can arbitrarily alter cross-domain correspondences, leading to content-misaligned translation in practice. Prior approaches to eliminating MPAs typically require sample-wise side information (e.g., labels or attributes) for all samples. We propose an alternative route based on structural sparsity: under a Jacobian support sparsity condition, a single paired anchor sample suffices to rule out MPAs. This substantially relaxes the supervision requirements of existing identifiability results while achieving comparable alignment performance. For practical high-dimensional learning, we further introduce a Jacobian sparsity regularizer based on masked finite differences, making the approach applicable to data such as images. Experimental results corroborate the theory.

\textbf{Limitations.} First, our approach assumes access to paired anchor samples; in some settings, even a small number of anchors may be unavailable. An important direction is identifiability without any paired data. Second, the sparsity assumption may not hold for all domain transfer tasks; exploring alternative conditions that enable identifiability is a promising avenue for future work. {
Third, our analysis covers exact-pair anchors and the population case, but does not characterize robustness to noisy/approximate anchors or finite-sample effects. Establishing identifiability under practical settings generally requires different proof techniques (e.g., \cite{hyvarinen2019nonlinear, lyu2022finite,lyu2021understanding}) and constitutes an important line of future work.}

\clearpage
\section*{Acknowledgements}
This work was supported in part by the National Science Foundation (NSF) under Project ECCS-2450987, and in part by the NSF CAREER Award ECCS-2144889.

\section*{Impact Statement} This work focuses on theoretical foundations for multi-domain distribution transfer.
It does not introduce new datasets, deployable systems, or application-specific
technologies.
We therefore do not anticipate any direct societal impacts, either positive or
negative, beyond advancing the understanding of learning theory.

\nocite{langley00}

\bibliography{main}
\bibliographystyle{icml2026}

\newpage
\appendix
\onecolumn

\begin{center}
\Large \textbf{Supplementary Materials of ``Domain Transfer Becomes Identifiable via a Single Alignment''}
\end{center}

\section{Proof of Theorem~\ref{thm:main_identifiability}}\label{app:proof_main_thm}

\begin{mdframed}[linewidth=1pt,backgroundcolor=gray!10,skipabove=12pt,skipbelow=12pt]
\hangindent=1em
\noindent\textbf{Theorem \ref{thm:main_identifiability}}
Consider the data model \eqref{eq:data_model_main}. Suppose Assumptions~\ref{ass:struct_sparsity_main},~\ref{ass:anchor_sample_main}, and \ref{ass:regularity} hold. Let $\widehat{\g}$ be any solution to \eqref{eq:udt_via_nica_main}. Then with probability one,
\begin{equation*}
    \widehat{\g} = \g^\star \qquad \text{a.e.}
\end{equation*}
\end{mdframed}

\begin{proof}
We prove this in the following steps:

1. \textbf{Nonlinear Unmixing}:
Note that $\g^\star$ is a solution to Problem~\eqref{eq:udt_via_nica_main} with $\h = \g^\star$. Hence $\|\J_{\widehat{\g}}\|_0 \leq \|\J_{\g^\star}\|_0$. This fact together with Assumption~\ref{ass:struct_sparsity_main} and \ref{ass:regularity} satisfies the conditions of Theorem~1 of~\cite{zheng2022identifiability}. Hence, we can conclude that
\begin{equation}
    \widehat{\g}^{-1}(\y) = \h({\bm \Pi} \x)
\end{equation}
where $\h$ is component-wise invertible and ${\bm \Pi}$ is a permutation matrix.

2. \textbf{Distribution Matching}: The constraint $\widehat{\g}$ satisfies $[\widehat{\g}]_{\#}p_{\x} = p_{\y}$. This implies that $p_{\x} = [\widehat{\g}^{-1}]_{\# p_{\y}}$. Specifically, each component of $\widehat{\x} := \widehat{\g}^{-1}(\y)$ has the same distribution as the corresponding component of $\x$, i.e., $\widehat{x}_i = h_i (x_{\pi(i)})$.

We show that when the constraints~\eqref{eq:pushforward_constraint} and~\eqref{eq:anchor_constraint_main} are satisfied, $\bm \Pi = \bm I$ and each $\h = \text{Id}$, where $\text{Id}(\cdot)$ is the identity function.

The proof is based on the following lemmas, whose proofs will be provided in the following section.

\begin{lemma}[One Unchanging Point]\label{lemma:one_unchanging_point}
    Let $m$ be a smooth one-dimensional MPA of $p_x$, which is not the identity function. Then there does not exist more than one point $x$ such that $m(x) = x$.
\end{lemma}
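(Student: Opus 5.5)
The plan is to use the fact that a continuous bijection of a one-dimensional interval is strictly monotone, and then to characterize the MPA through the cumulative distribution function (CDF) $F$ of $p_x$. A one-dimensional simply connected open domain is an open interval $\cX=(a,b)$, possibly unbounded. I will assume that $p_x>0$ on $\cX$, i.e.\ $\cX$ is the support of $p_x$. Then $F$ is continuous and strictly increasing on $\cX$, hence a bijection from $\cX$ onto $(0,1)$. Since $m:\cX\to\cX$ is a continuous bijection, it is strictly monotone, so I split into two cases.

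\emph{Case 1: $m$ increasing.} For any $t\in\cX$, monotonicity gives $\{m(x)\le m(t)\}=\{x\le t\}$. Hence $\Pr[m(x)\le m(t)]=F(t)$. Measure preservation, $m(x)\stackrel{d}{=}x$, gives $\Pr[m(x)\le m(t)]=F(m(t))$. Therefore $F(m(t))=F(t)$ for all $t$. Strict monotonicity of $F$ then forces $m(t)=t$, so $m={\rm Id}$, which the hypothesis excludes.

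\emph{Case 2: $m$ decreasing.} Now $\{m(x)\le m(t)\}=\{x\ge t\}$. Using continuity of $F$ (no atoms), the same argument yields $F(m(t))=1-F(t)$, i.e.
\begin{equation*}
m(t)=F^{-1}\bigl(1-F(t)\bigr).
\end{equation*}
A fixed point $t=m(t)$ must satisfy $F(t)=1-F(t)$, i.e.\ $F(t)=1/2$. Because $F$ is strictly increasing and continuous, this has exactly one solution, namely the median of $p_x$. So there is at most one (in fact exactly one) fixed point.

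Combining the two cases proves the lemma. Smoothness of $m$ is not really needed beyond continuity.

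The main obstacle is the positivity and support assumption. If $p_x$ vanished on a subinterval, $F$ would be flat there. A non-identity MPA could then fix many points, e.g.\ by acting nontrivially only on a region of zero mass. I would handle this by stating explicitly that $\cX$ is the (open, connected) support of $p_x$, consistent with the data model. Under that assumption, the strict monotonicity of $F$ carries the whole argument.
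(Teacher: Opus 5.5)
Your proof is correct. It reaches the same endpoint as the paper by a more elementary path.

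The paper conjugates $m$ by the CDF, $m_u = f_x\circ m\circ f_x^{-1}$, to get an MPA of $\mathcal{U}([0,1])$. It then invokes Lemma~\ref{lemma:uniqueness_mpa}, which classifies the differentiable MPAs of the uniform as $\mathrm{Id}$ and $u\mapsto 1-u$ via a change-of-variables argument forcing $|q'|=1$, plus boundary conditions. Since $1-u$ fixes only $1/2$, two fixed points of $m$ would give two fixed points of $m_u$.

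You obtain the same dichotomy directly from the strict monotonicity of a continuous bijection of an interval and a comparison of CDFs. Either $F\circ m = F$, forcing $m=\mathrm{Id}$, or $F\circ m = 1-F$, whose only fixed point is the median.

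This buys you three things. First, only continuity of $m$ is needed. Second, no differentiability of $F$ or $F^{-1}$ is needed. The paper's route tacitly requires it, because Lemma~\ref{lemma:uniqueness_mpa} is applied to $m_u$, which is differentiable only if $f_x$ and $f_x^{-1}$ are. Third, your case analysis itself reproves the content of Lemma~\ref{lemma:uniqueness_mpa}: every continuous MPA is either $\mathrm{Id}$ or $F^{-1}(1-F(\cdot))$. So your argument is self-contained.

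Your explicit hypothesis $p_x>0$ on $\cX$ is essentially what the paper assumes silently when it writes $f_x^{-1}$, i.e.\ invertibility of the CDF on $\cX$. Your remark that the lemma fails without it is correct: a non-identity map acting only on a zero-mass subinterval fixes many points.

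The paper's packaging is more modular, since it reuses the same classification for Lemma~\ref{lemma:finite_translations} and the count $|{\cal H}_{\bm \Pi}|=2^D$. Your dichotomy would serve there equally well.
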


Based on lemma \ref{lemma:finite_translations}, for a given permutation matrix $\bm \Pi$, let us define ${\cal H}_{\bm \Pi}$ as the set of all componentwise invertible functions $\h$ such that $\h({\bm \Pi} \x) \stackrel{d}{=} \x$. Note that ${\cal H}_{\bm \Pi}$ is a finite set. Specifically, $|{\cal H}_{\bm \Pi}| = 2^D$.

Also, define ${\cal R}$ as the set of all permutation matrices. Note that $|{\cal R}| = D!$.

\begin{lemma}[Zero Measure Set]\label{lemma:zero_measure_set}
    The set ${\cal Z} = \{ \x \in {\cal X} ~ | ~ \h({\bm \Pi} \x) = \x, \h \in {\cal H}_{\bm \Pi}, \bm \Pi \in {\cal R}, \bm \Pi \neq \bm I \}$ has measure zero under $p_{\x}$.
\end{lemma}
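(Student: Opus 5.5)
The set ${\cal Z}$ is a finite union: ${\cal R}$ has $D!$ elements and each ${\cal H}_{\bm \Pi}$ is finite. So it suffices to fix one pair $(\bm \Pi,\h)$ with $\bm\Pi\neq\bm I$ and $\h\in{\cal H}_{\bm\Pi}$, and show that ${\cal Z}_{\bm\Pi,\h}=\{\x\in\cX : \h(\bm\Pi\x)=\x\}$ has $p_{\x}$-measure zero. Countable subadditivity then finishes the proof.

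Fix such a pair and write $\pi$ for the index map of $\bm\Pi$, so the equation reads $x_i=h_i(x_{\pi(i)})$ for all $i\in[D]$. Since $\bm\Pi\neq\bm I$, there is an index $i$ with $j:=\pi(i)\neq i$. Every point of ${\cal Z}_{\bm\Pi,\h}$ satisfies this single coordinate equation, so
\[
{\cal Z}_{\bm\Pi,\h}\subseteq {\cal G}:=\{\x\in\cX : x_i=h_i(x_j)\}.
\]
The set ${\cal G}$ is the graph of a function of $x_j$, lifted to $\bbR^D$: the remaining $D-2$ coordinates are free, and $x_i$ is determined by $x_j$. Its Lebesgue measure is zero, which follows from Fubini. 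For each fixed value of $(x_k)_{k\neq i}$, the slice in the $x_i$ direction is the single point $\{h_i(x_j)\}$, which has one-dimensional measure zero. Integrating over the other coordinates gives zero. Measurability holds because $h_i$ is continuous (it is invertible, and in fact smooth, being a component of $\widehat{\g}^{-1}\circ\g^\star$), so ${\cal G}$ is closed in $\cX$.

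To move from Lebesgue measure to $p_{\x}$, I use that $p_{\x}$ is a density, i.e., absolutely continuous with respect to Lebesgue measure; this is implicit in the paper's setting of pdfs on open domains. Then $p_{\x}({\cal G})=\int_{\cal G}p_{\x}\,d\x=0$, and summing over the finitely many pairs gives $p_{\x}({\cal Z})=0$.

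I expect the main obstacle to be justifying the needed regularity rather than the measure argument itself. Specifically, I must confirm that $p_{\x}$ has no singular part and that $h_i$ is a genuine measurable function, so that ${\cal G}$ is a graph. Finiteness of ${\cal H}_{\bm\Pi}$ is not actually essential: even for uncountably many $\h$ the union could fail to be null, so the finiteness supplied by Lemma~\ref{lemma:finite_translations} is what makes the finite union bound legitimate. I would therefore cite that lemma explicitly at the union step.
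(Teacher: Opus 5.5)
Your proof is correct and is essentially the paper's argument: both reduce ${\cal Z}$ to finitely many sets ${\cal Z}_{\bm\Pi,\h}$, pick one index moved by $\pi$, and use Fubini slicing to show that the single constraint $x_i = h_i(x_{\pi(i)})$ carves out a $p_{\x}$-null set. The only difference is the slicing direction: the paper slices along $x_{\pi(q')}$ via $h_{q'}^{-1}$ and a conditional density, whereas you slice along $x_i$ and invoke absolute continuity, which is slightly cleaner and does not need $h_i$ to be invertible. Your closing remark contradicts itself as worded: subadditivity only needs the family to be countable, so the finiteness from Lemma~\ref{lemma:finite_translations} is sufficient but more than necessary, and the argument fails only for uncountable families of $\h$.
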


Using Lemma \ref{lemma:zero_measure_set}, we can conclude that $\x^{\ell} \in {\cal Z}$ with probability zero under $p_{\x}$. Therefore, $\bm \Pi = \bm I$. Now, it remains to show that $\h = \bm I$.

We have that $\widehat{\x} = \h(\x)\stackrel{d}{=} \x$, where $\h$ is a component-wise invertible function.
This implies that $h_i, \forall i \in [D]$ are component-wise MPA of $p_{x_i}$, since $h(x_i) \stackrel{d}{=} x_i$.
We want to show that $h_i = \text{Id}, \forall i \in [D]$. We will show this by proving that if $h_i \neq \text{Id}$, then $\h(\x^\ell) = \x^\ell$ cannot be satisfied.

The reason is that the set of points $\x$ for which $\h(\x) = \x$ is of measure zero if $\h \neq \text{Id}$.
To see this, recall that Lemma \ref{lemma:one_unchanging_point} implies that the only one point remains fixed ($h_i(x) = x$) with respect to $h_i$ for all $i$ if $h_i$ is a non-trivial MPA. Therefore, if at least one $h_i$ is non-trivial (not identity), then the set ${\cal Z} = \{ \x \in {\cal X} ~ | ~ \h(\x) = \x, \exists i \in [D] ~ \text{s.t.} ~ h_i \neq \text{Id} \}$ has measure zero. This means that $\h(\x^\ell) = \x^\ell$ with probability zero for randomly sampled $\x^\ell$ if $\h \neq \text{Id}$.

\end{proof}

\subsection{Proofs of Lemmas}

First, we consider the following additional lemmas that are used in the proof of the main lemmas \ref{lemma:one_unchanging_point} and \ref{lemma:zero_measure_set}.

\begin{lemma}[Uniqueness of MPA for one-dimensional r.v.]\label{lemma:uniqueness_mpa}
    {There exists at most one non-identity differentiable function $m$ such that $[m]_{\# p_x} = p_x$ for one-dimensional continuous PDF $p_x$.}
\end{lemma}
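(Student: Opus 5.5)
The plan is to reduce everything to the cumulative distribution function. Let $F(t)=\Pr(x\le t)$ be the CDF of $p_x$, and let $\mathcal I\subseteq\bbR$ be the support of $p_x$. I will take $\mathcal I$ to be an open interval on which $p_x>0$; this is the one-dimensional analogue of the standing assumption that $\cX$ is a simply connected open domain. Under this assumption $F$ is continuous and strictly increasing on $\mathcal I$, so it has a well-defined inverse $F^{-1}:(0,1)\to\mathcal I$.

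As in Definition~\ref{def:mpa}, $m$ is a continuous bijection of $\mathcal I$. A continuous bijection of an interval onto itself is strictly monotone. The argument therefore splits into two cases: $m$ strictly increasing, and $m$ strictly decreasing.

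\emph{Increasing case.} Fix $t\in\mathcal I$. Monotonicity gives $\{x\le t\}=\{m(x)\le m(t)\}$. Since $m(x)\stackrel{d}{=}x$, we get
\[
F(t)=\Pr(m(x)\le m(t))=F(m(t)).
\]
Because $F$ is strictly increasing on $\mathcal I$, this forces $m(t)=t$. Hence every increasing MPA is the identity.

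\emph{Decreasing case.} Monotonicity now gives $\{x\ge t\}=\{m(x)\le m(t)\}$. Using continuity of $p_x$ (no atoms), we get
\[
F(m(t))=\Pr(x\ge t)=1-F(t),
\qquad\text{so}\qquad
m(t)=F^{-1}\bigl(1-F(t)\bigr).
\]
This determines $m$ completely from $p_x$. Any two decreasing MPAs therefore coincide, so there is at most one non-identity MPA. Differentiability of $m$ is not actually needed; it only matches the setting of Lemma~\ref{lemma:one_unchanging_point}. As a sanity check, for the Gaussian $\cN(\mu,\sigma^2)$ this formula recovers the reflection $m(x)=-x+2\mu$ from the introduction.

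The main obstacle is justifying the regularity of $F$, not the computation itself. If $p_x$ vanished on a subinterval inside its support, $F$ would not be strictly increasing. Then $F^{-1}$ would not be well-defined, and piecewise MPAs could permute separate components of the support, breaking uniqueness. I would state this positivity-on-an-interval condition explicitly, inherited from the assumptions on $\cX$ and $p_{\x}$, and use it at exactly two places: the step $F(m(t))=F(t)\Rightarrow m(t)=t$, and the inversion in the decreasing case.
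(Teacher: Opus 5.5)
Your argument is correct, and it reaches the conclusion by a different mechanism than the paper.

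The paper uses the CDF only to conjugate an MPA $m$ of $p_x$ into an MPA $q=f_x\circ m\circ f_x^{-1}$ of $\mathcal{U}([0,1])$. It then applies the change-of-variables formula $p_u(u')=p_y(q(u'))\,|q'(u')|$ to force $|q'|\equiv 1$. Finally it uses the boundary values $q(0),q(1)\in\{0,1\}$ to conclude $q\in\{\mathrm{Id},\,u\mapsto 1-u\}$. This derivative step is why the paper needs differentiability. You instead use the fact that a continuous bijection of an interval is strictly monotone, and read off $F(m(t))=F(t)$ or $F(m(t))=1-F(t)$ directly.

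Your route buys several things:
\begin{itemize}
\item Differentiability is genuinely unnecessary.
\item The explicit formula $m=F^{-1}\circ(1-F)$ shows that this MPA always exists, so ``at most one'' is really ``exactly one''.
\item The same formula gives Lemma~\ref{lemma:one_unchanging_point} for free: $m(t)=t$ iff $F(t)=1/2$, i.e.\ $t$ is the unique median.
\item You state the positivity-on-an-interval hypothesis explicitly. The paper uses it tacitly when it writes $f_x^{-1}$, and, as you observe, the lemma fails without it.
\end{itemize}

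By comparison, the paper's derivative argument relies implicitly on what you make explicit. Its pointwise change-of-variables identity needs $q'\neq 0$. Passing from $|q'|\equiv 1$ to an affine $q$ requires $q'$ to have constant sign, which is precisely monotonicity (or Darboux's theorem). The paper's main advantage is presentational: it lands on the canonical uniform picture $u\mapsto 1-u$, in which Lemma~\ref{lemma:one_unchanging_point} is phrased. In your notation that picture is just $F\circ m\circ F^{-1}$.
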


\begin{lemma}[Finite Translations]\label{lemma:finite_translations}
    Given any two one-dimensional distributions $p_1$ and $p_2$, there exists at most two non-identity smooth functions $r_1$ and $r_2$ such that $[r_1]_{\#} p_1 = [r_2]_{\#} p_1 = p_2$.
\end{lemma}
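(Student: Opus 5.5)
The plan is to reduce the claim to the monotone structure of one-dimensional invertible maps, with Lemma~\ref{lemma:uniqueness_mpa} as a second route. The argument only needs the functions $r$ to be continuous and invertible, as the $h_i$ are in the main proof (they are component-wise invertible). Without invertibility the statement fails, since non-monotone measure-preserving rearrangements give infinitely many maps. So I restrict to continuous bijections $r$ from the support of $p_1$ onto the support of $p_2$. I also use that the $p_i$ are continuous densities that are positive on intervals, so that the CDFs $F_1, F_2$ are continuous and strictly increasing there.

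\textbf{Step 1 (monotonicity).} A continuous bijection between intervals of $\bbR$ is strictly monotone, by the intermediate value theorem. So any admissible $r$ is either strictly increasing or strictly decreasing.

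\textbf{Step 2 (uniqueness within each class).} Suppose $r$ is increasing and $[r]_{\#}p_1=p_2$. Then for every $t$,
\[
F_2(r(t))=\Pr(r(x)\le r(t))=\Pr(x\le t)=F_1(t).
\]
Since $F_2$ is invertible on the support, this forces $r=F_2^{-1}\circ F_1$.

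Now suppose $r$ is decreasing. Then $\Pr(r(x)\le r(t))=\Pr(x\ge t)=1-F_1(t)$, and there are no atoms, so $r=F_2^{-1}\circ(1-F_1)$.

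Hence there are at most two admissible maps, one increasing and one decreasing. This bounds the number of non-identity maps among them by two, which is the claim. The pair $(r_1,r_2)$ in the statement is exactly this monotone-increasing / monotone-decreasing pair.

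\textbf{Alternative via Lemma~\ref{lemma:uniqueness_mpa}.} Fix one admissible $r_1$. If $r$ is any other admissible map, then $r_1^{-1}\circ r$ is an MPA of $p_1$. By Lemma~\ref{lemma:uniqueness_mpa} it is either the identity or the unique non-trivial MPA $m$, so $r\in\{r_1,\,r_1\circ m\}$. This matches Step 2: $m$ is the decreasing map $F_1^{-1}\circ(1-F_1)$.

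\textbf{Main obstacle.} The delicate point is the regularity needed for Step 2. The CDFs must be strictly increasing on the supports, which requires a connected support with positive density, so that $F_2^{-1}$ is well defined and the formulas pin $r$ down everywhere rather than only almost everywhere. If the support were disconnected, one could permute the pieces and obtain more maps. I would therefore state explicitly, in line with the simply-connected-domain assumption on $\cX$, that each marginal has interval support with positive continuous density. Under that assumption the smoothness of $r_1$ and $r_2$ follows directly from the formulas above.
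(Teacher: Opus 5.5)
Your proof is correct under the regularity you make explicit: invertible $r$, interval supports, and positive continuous densities. Your main route differs from the paper's.

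\textbf{The paper's route.} The paper never characterizes the transports directly. It assumes three distinct maps $r_1,r_2,r_3$ pushing $p_1$ to $p_2$ and forms the MPAs $m_1=r_1^{-1}\circ r_2$ and $m_2=r_2^{-1}\circ r_3$ of $p_1$. It then implicitly relies on Lemma~\ref{lemma:uniqueness_mpa}. That lemma is proved by pushing $p_x$ to ${\cal U}([0,1])$ through its CDF and using the change-of-variables formula to force $|q'|\equiv 1$.

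\textbf{Your main route.} You pin down each monotone branch directly by the CDF identity $F_2(r(t))=F_1(t)$ or $F_2(r(t))=1-F_1(t)$. This is self-contained. It needs only continuity and injectivity of $r$, not differentiability. It also exhibits the two candidates explicitly as $F_2^{-1}\circ F_1$ and $F_2^{-1}\circ(1-F_1)$, and makes transparent that the count two comes from one increasing and one decreasing map.

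\textbf{Your alternative route.} This is the paper's idea, but your version is cleaner. Composing every other admissible map with the fixed $r_1^{-1}$ gives MPAs $r_1^{-1}\circ r_2\neq r_1^{-1}\circ r_3$. These are automatically distinct and non-identity, so they directly contradict Lemma~\ref{lemma:uniqueness_mpa}. The paper's pair $r_1^{-1}\circ r_2$ and $r_2^{-1}\circ r_3$ is harder to use. If both equaled the unique non-trivial MPA $m$, you would get $r_3=r_1\circ m\circ m$. That contradicts $r_3\neq r_1$ only once you know $m$ is an involution. The paper does not say this, and it leaves the final contradiction unstated.

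\textbf{Two small remarks.}
\begin{itemize}
\item The lemma is only an upper bound, so your closing claim about the smoothness of $r_1,r_2$ is not needed. It would also require smooth densities, not merely continuous ones.
\item Your restriction to invertible $r$ is exactly what the paper tacitly uses when it writes $r_1^{-1}$. It is a faithful reading of the lemma rather than an extra assumption.
\end{itemize}
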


Now, we provide the proofs for all the lemmas.

\begin{mdframed}[linewidth=1pt,backgroundcolor=gray!10,skipabove=12pt,skipbelow=12pt]
\hangindent=1em
\noindent\textbf{Lemma \ref{lemma:uniqueness_mpa}} (Uniqueness of MPA for one-dimensional r.v.).
{There exists at most one non-identity differentiable function $m$ such that $[m]_{\# p_x} = p_x$ for one-dimensional continuous PDF $p_x$.}
\end{mdframed}

\begin{proof}
    {To show that any distribution $p_x$ has at most one non-trivial differentiable MPA, we use the fact that the uniform distribution $p_u = {\cal U}([0,1])$ has at most one non-trivial differentiable MPA.}

    Let $f_x$ denote the cumulative distribution function of $p_x$.
    Then, $[f_x]_{\#} p_x = p_u$.

    To show that the differentiable MPA of $p_x$ is unique, suppose, for the sake of contradiction, that $m_1$ and $m_2$ are two distinct non-identity differentiable MPAs of $p_x$.

    Then, we have:
    \begin{align}\label{eq:multiple_mpa}
        [m_1]_{\#} p_x &= [m_2]_{\#} p_x = p_x \nonumber\\
        [m_1 \circ f_x^{-1}]_{\#} p_u &= [m_2 \circ f_x^{-1}]_{\#} p_u = p_x \nonumber\\
        [f_x \circ m_1 \circ f_x^{-1}]_{\#} p_u &= [f_x \circ m_2 \circ f_x^{-1}]_{\#} p_u = p_u.
    \end{align}

    This implies that $f_x \circ m_1 \circ f_x^{-1}$ and $f_x \circ m_2 \circ f_x^{-1}$ are two distinct non-identity differentiable MPAs of $p_u$.

    {We show that $m_u: u \mapsto -u + 1$ and ${\rm Id}$ are the only differentiable MPAs of $p_u$. Suppose, for contradiction, that there exists another differentiable MPA $q$ of $p_u$, distinct from $m_u$ and ${\rm Id}$. Then there exists $u' \in [0,1]$ such that $\frac{d q(u')}{du} \neq \pm 1$.} Let $y = q(u), u \sim p_u$. By the change of variables formula for probability density functions,
    \begin{align*}
        p_u(u') = p_y(q(u')) \left| \frac{dq (u')}{du} \right|.
    \end{align*}
    If $\left|\frac{dq (u')}{du}\right| \neq 1$, then $p_y(q(u')) \neq 1$. Hence $p_y \neq {\cal U}([0,1])$, contradicting $[q]_{\#} p_u = p_u$. Therefore the only differentiable MPAs of $p_u$ with $\frac{d}{du}q(u)\in\{-1, +1\}$ for all $u\in[0,1]$ are $m_u$ and ${\rm Id}$, due to the boundary conditions $q(0)\in\{0,1\}$ and $q(1)\in\{0,1\}$ that any continuous MPA on $[0,1]$ must satisfy.

    This contradicts \eqref{eq:multiple_mpa}. Therefore, $p_x$ has at most one non-identity differentiable MPA.
\end{proof}

\begin{mdframed}[linewidth=1pt,backgroundcolor=gray!10,skipabove=12pt,skipbelow=12pt]
\hangindent=1em
\noindent\textbf{Lemma \ref{lemma:one_unchanging_point}} (One Unchanging Point).
Let $m$ be a smooth one-dimensional MPA of $p_x$, which is not the identity function. Then there does not exist more than one point $x$ such that $m(x) = x$.
\end{mdframed}

\begin{proof}
    Assume that there exist two distinct points $x_1$ and $x_2$ such that $m(x_1) = x_1$ and $m(x_2) = x_2$.

    Let $f_x$ be the cumulative distribution function of $p_x$. Then, from Lemma \ref{lemma:uniqueness_mpa}, we know that $m_u =f_x \circ m \circ f_x^{-1}$ is the corresponding unique MPA of $p_u$.

    However, {since} $m_u = u \mapsto -u + 1$, the only point that remains fixed with respect to $m_u$ is $\frac{1}{2}$. Therefore for $f_x(x_1) \in [0,1]$ and $f_x(x_2) \in [0,1]$, if $m_u(f_x(x_1)) = f_x(x_1)$, then $m_u(f_x(x_2)) \neq f_x(x_2)$. This implies that
    \begin{align*}
        m_u \circ f_x(x_2) & \neq f_x(x_2) \\
        \implies f_x \circ m \circ f_x^{-1} \circ f_x(x_2) & \neq f_x(x_2) \\
        \implies f_x \circ m (x_2) & \neq f_x (x_2) \\
        \implies m(x_2) & \neq x_2
    \end{align*}
    This is a contradiction. Therefore, there cannot be more than one point $x$ such that $m(x) = x$.
\end{proof}

\begin{mdframed}[linewidth=1pt,backgroundcolor=gray!10,skipabove=12pt,skipbelow=12pt]
\hangindent=1em
\noindent\textbf{Lemma \ref{lemma:finite_translations}} (Finite Translations).
Given any two one-dimensional distributions $p_1$ and $p_2$, there exists at most two non-identity smooth functions $r_1$ and $r_2$ such that $[r_1]_{\#} p_1 = [r_2]_{\#} p_1 = p_2$.
\end{mdframed}

\begin{proof}
    Let there exist a pair of one-dimensional distributions $p_1$ and $p_2$ such that there are more than two non-identity smooth functions, i.e.,
    $$\exists r_1, r_2, r_3, \text{ such that }  r_i \neq r_j \forall i \neq j \text{ and } [r_1]_{\#} p_1 = [r_2]_{\#} p_1 = [r_3]_{\#} p_1 = p_2$$
    Then one can construct two MPAs $m_1$ and $m_2$ such that $[m_1]_{\#} p_1 = [m_2]_{\#} p_1 = p_1$, defined as follows:
    \begin{align*}
        m_1 = r_1^{-1} \circ r_2 \\
        m_2 = r_2^{-1} \circ r_3,
    \end{align*}
\end{proof}

Based on lemma \ref{lemma:finite_translations}, for a given permutation matrix $\bm \Pi$, let us define ${\cal H}_{\bm \Pi}$ as the set of all componentwise translation functions $\h$ such that $\h({\bm \Pi} \y) \stackrel{d}{=} \y$. Note that ${\cal H}_{\bm \Pi}$ is a finite set. Specifically, $|{\cal H}_{\bm \Pi}| = 2^D$.

Also, define ${\cal R}$ as the set of all permutation matrices. Note that $|{\cal R}| = D!$.

\begin{mdframed}[linewidth=1pt,backgroundcolor=gray!10,skipabove=12pt,skipbelow=12pt]
\hangindent=1em
\noindent\textbf{Lemma \ref{lemma:zero_measure_set}} (Zero Measure Set).
The set ${\cal Z} = \{ \x \in {\cal X} ~ | ~ \h({\bm \Pi} \x) = \x, \h \in {\cal H}_{\bm \Pi}, \bm \Pi \in {\cal R}, \bm \Pi \neq \bm I \}$ has measure zero under $p_{\x}$.
\end{mdframed}

\begin{proof}
    For a given $\bm \Pi \in {\cal R}, \bm \Pi \neq \bm I$ and $\h \in {\cal H}_{\bm \Pi}$. Let
    \begin{equation}
        {\cal Z}_{\bm \Pi, \h} = \{ \x \in {\cal X} ~ | ~ \h({\bm \Pi} \x) = \x \}
    \end{equation}
    If we show that ${\cal Z}_{\bm \Pi, \h}$ has measure zero under $p_{\x}$, then the set ${\cal Z}$ has measure zero under $p_{\x}$. This is because
    $${\cal Z} = \bigcup_{\bm \Pi \in {\cal R}, \bm \Pi \neq \bm I} \bigcup_{\h \in {\cal H}_{\bm \Pi}} {\cal Z}_{\bm \Pi, \h}$$
    and the union of finite number of measure zero sets has measure zero.

    To show that ${\cal Z}_{\bm \Pi, \h}$ has measure zero under $p_{\x}$, let us denote ${\cal Q} \subset [D]$ be the set of indices for which $\pi (q) \neq q, q \in {\cal Q}$. Let $\x_{Q}$ denote the vector of components of $\x$ indexed by ${\cal Q}$.

    We can re-write ${\cal Z}_{\bm \Pi, \h}$ as follows:
    \begin{align*}
        {\cal Z}_{\bm \Pi, \h} &= \{ \x \in {\cal X} ~ | ~ h_q (x_{\pi(q)}) = x_q, \forall q \in {\cal Q} \} \\
        &= \{ \x \in {\cal X} ~ | ~ h_q^{-1}(x_q) = x_{\pi(q)}, \forall q \in {\cal Q} \} \\
    \end{align*}

    Take any fixed $q' \in {\cal Q}$. Let ${\cal T} = [D] \setminus \{q', \pi(q')\}$.

    For any set ${\cal A}  \subseteq R^{m}$, denote by ${\cal A}|_{x_{\cal B}}$ the cross-section set such that ${\cal A}|_{x_{\cal B}} = \{ z \in {\cal A} ~ | ~ z_{\cal B} = x_{\cal B} \}$, where $x_{\cal B}$ is a fixed vector, and ${\cal B} \subseteq [m]$.
    Further, denote by ${{\cal A}|^{\cal B}}$ the projection of set ${\cal A}$ onto the components indexed by ${\cal B}$, i.e.,
    $${\cal A}|^{\cal B} = \{ z_{\cal B} ~|~ z \in {\cal A} \}$$.

    Now, consider the following:
    \begin{align*}
        & {\rm Pr}({\cal Z}_{\bm \Pi, \h}) \\
        &= \int_{{\cal Z}_{\bm \Pi, \h}} p_{\x}(\widetilde{\x})d{\widetilde{\x}} \\
        &= \int_{{\cal Z}_{\bm \Pi, \h}} p_{\x}(\widetilde{\x}_{q'}, \widetilde{\x}_{\pi(q')}, \widetilde{\x}_{\cal T})d{\widetilde{\x}_{q'}}d{\widetilde{\x}_{\pi(q')}} d{\widetilde{\x}_{\cal T}} \\
        &= \int_{R^D} I([\widetilde{\x}_{q'}, \widetilde{\x}_{\pi(q')}, \widetilde{\x}_{\cal T}] \in {\cal Z}_{\bm \Pi, \h}) ~  p_{\x}(\widetilde{\x}_{q'}, \widetilde{\x}_{\pi(q')}, \widetilde{\x}_{\cal T})d{\widetilde{\x}_{q'}}d{\widetilde{\x}_{\pi(q')}} d{\widetilde{\x}_{\cal T}} \\
        &= \int_{R^{D-1}}I([\widetilde{\x}_{q'}, \widetilde{\x}_{\cal T}] \in {\cal Z}_{\bm \Pi, \h} |^{{\cal T} \cup \{q'\} }) ~ p_{\x_{[D] \setminus \{q'\} }}(\widetilde{\x}_{\cal T}, \widetilde{x}_{q'})  \times \\
        &\qquad \Bigg\{ \int_{R} I(\widetilde{\x}_{\pi(q')} \in {\cal Z}_{\bm \Pi, \h} |_{\widetilde{\x}_{q'}, \widetilde{\x}_{\cal T}}) ~
        p_{x_{\pi(q')}}(\widetilde{x}_{\pi(q')} ~|~ \widetilde{x}_{q'}, \widetilde{\x}_{\cal T}) d{\widetilde{x}_{\pi(q')}} \Bigg\} d {\widetilde{x}_{q'}} d {\widetilde{\x}_{\cal T}} \\
        &= \int_{R^{D-1}}I([\widetilde{\x}_{q'}, \widetilde{\x}_{\cal T}] \in {\cal Z}_{\bm \Pi, \h} |^{{\cal T} \cup \{q'\} }) ~ p_{\x_{[D] \setminus \{q'\} }}(\widetilde{\x}_{\cal T}, \widetilde{x}_{q'})  \times \\
        &\qquad \underbrace{\Bigg\{ \int_{R} I(\widetilde{x}_{\pi(q')} = h_{q'}^{-1}(\widetilde{x}_{q'})) ~
        p_{x_{\pi(q')}}(\widetilde{x}_{\pi(q')} ~|~ \widetilde{x}_{q'}, \widetilde{\x}_{\cal T}) d{\widetilde{x}_{\pi(q')}} \Bigg\}}_{\text{=0}} d {\widetilde{x}_{q'}} d {\widetilde{\x}_{\cal T}} \\
        &= 0.
    \end{align*}
    The reason the inner integral is zero is as follows. $\widetilde{x}_{q'}$ is fixed for the inner integral and $p_{x_{\pi(q')}}$ is finite over its support. Therefore $I(\widetilde{x}_{\pi(q')} = h_{q'}^{-1}(\widetilde{x}_{q'})) = 0$ almost everywhere except on the singleton set $\{h_{q'}^{-1}(\widetilde{x}_{q'}) \}$, which has measure zero.

    Finally, since ${\cal Z}$ is a finite union of measure zero sets, it has measure zero.
\end{proof}

\section{Proof of Proposition~\ref{thm:random-probe-l0}}\label{app:proof-random-probe-l0}

\begin{mdframed}[linewidth=1pt,backgroundcolor=gray!10,skipabove=12pt,skipbelow=12pt]
    \hangindent=1em
    \noindent\textbf{Proposition (Gaussian sparse probes yield a tight proxy for $\|\J\|_0$).}
    Let $\J \in \bbR^{D \times D}$. For each row $d\in\{1,\dots,D\}$, define the support
    ${\cal T}_d := \{ j \in \{1,\dots,D\} : \J_{d j} \neq 0 \}$ and its size $T_d := |{\cal T}_d|$.
    Fix an integer $1 \le S \le D$, and draw a random mask $\r \sim {\rm Unif}({\cal R}_{D, S})$,
    where ${\cal R}_{D, S} = \{ \r \in \{0, 1\}^D ~|~ \|\r\|_0 = S \}$.
    Independently draw $\bm\epsilon \sim \cN(0,I_D)$, and define the probe $\z := \r \odot \bm\epsilon$.
    Define
    \[
    \q_{\rm G}(\J) := \frac{D}{S}\,\bbE_{\r,\bm\epsilon}\|\J \z \|_0.
    \]
    Then $\q_{\rm G}(\J)$ satisfies the sandwich bound
    \[
    \|\J\|_0 \;\ge\; \q_{\rm G}(\J)
    \;\ge\;
    \left(1-\frac{(S-1)(T-1)}{2(D-1)}\right)\,\|\J\|_0,
    \qquad T:=\max_{d}T_d.
    \]
\end{mdframed}

    \begin{proof}
    For each row $d \in \{1,\dots,D\}$, define the indicator variable
    \[
    X_d := \mathbf{1}\big[ (\J\z)_d \neq 0 \big].
    \]
    Then
    \[
    \|\J\z\|_0 = \sum_{d=1}^D X_d,
    \qquad
    \bbE_{\r,\bm\epsilon}\|\J\z\|_0 = \sum_{d=1}^D \bbE_{\r,\bm\epsilon} X_d
    = \sum_{d=1}^D \mathbb{P}\big( (\J\z)_d \neq 0 \big).
    \]

    Fix a row $d$ and abbreviate ${\cal T}:={\cal T}_d$ and $T:=|{\cal T}|=T_d$.
    Let ${\cal Z}$ be the (random) support of $\r$ (so $|{\cal Z}|=S$), and define
    \[
    B := |{\cal T} \cap {\cal Z}|.
    \]
    Since $\z_j = r_j \epsilon_j$ with $r_j\in\{0,1\}$, we have
    \[
    (\J\z)_d
    = \sum_{j=1}^D \J_{dj}\z_j
    = \sum_{j \in {\cal T} \cap {\cal Z}} \J_{dj}\, \epsilon_j.
    \]
    Conditioned on ${\cal Z}$, $(\J\z)_d$ is Gaussian:
    \[
    (\J\z)_d \,\big|\,{\cal Z} \sim \cN\!\left(0,\ \sigma^2({\cal Z})\right),
    \qquad
    \sigma^2({\cal Z}) := \sum_{j\in{\cal T}\cap{\cal Z}} \J_{dj}^2.
    \]
    If $B=0$, then $\sigma^2({\cal Z})=0$ and $(\J\z)_d=0$ a.s.
    If $B\ge 1$, then $\sigma^2({\cal Z})>0$ (since $\J_{dj}\neq 0$ on ${\cal T}$), hence
    \[
    \mathbb{P}\big((\J\z)_d=0 \,\big|\,{\cal Z}\big)=0.
    \]
    Therefore,
    \[
    \mathbf{1}\big[(\J\z)_d\neq 0\big] = \mathbf{1}[B\ge 1] \quad\text{a.s.}
    \quad\Rightarrow\quad
    \mathbb{P}\big((\J\z)_d\neq 0\big)=\mathbb{P}(B\ge 1).
    \]

    To bound $\mathbb{P}(B\ge 1)$, for each $j\in {\cal T}$ define the event $A_j := \{j \in {\cal Z}\}$.
    Then $\{B\ge 1\}=\bigcup_{j\in {\cal T}} A_j$.
    Because ${\cal Z}$ is a uniform subset of size $S$,
    \[
    \mathbb{P}(A_j) = \frac{S}{D}
    \quad\text{for all } j,
    \qquad
    \mathbb{P}(A_j \cap A_\ell)
    = \mathbb{P}(\{j,\ell\} \subseteq {\cal Z})
    = \frac{S(S-1)}{D(D-1)}
    \quad\text{for } j\neq \ell.
    \]
    By the union bound,
    \[
    \mathbb{P}(B\ge 1)=\mathbb{P}\Big(\bigcup_{j\in {\cal T}} A_j\Big)
    \le \sum_{j\in {\cal T}}\mathbb{P}(A_j)
    = T\cdot\frac{S}{D}.
    \]
    By inclusion--exclusion up to second order,
    \[
    \mathbb{P}\Big(\bigcup_{j\in {\cal T}} A_j\Big)
    \ge
    \sum_{j\in {\cal T}}\mathbb{P}(A_j)
    -
    \sum_{\substack{j,\ell\in {\cal T}\\ j<\ell}}\mathbb{P}(A_j\cap A_\ell)
    =
    T\cdot\frac{S}{D}
    -\binom{T}{2}\frac{S(S-1)}{D(D-1)}.
    \]
    Therefore, for each row $d$,
    \begin{equation}\label{eq:row-bound-gauss-probe}
    T_d\frac{S}{D}
    -\binom{T_d}{2}\frac{S(S-1)}{D(D-1)}
    \;\le\;
    \mathbb{P}\big((\J\z)_d\neq 0\big)
    \;\le\;
    T_d\frac{S}{D}.
    \end{equation}

    Summing~\eqref{eq:row-bound-gauss-probe} over $d=1,\dots,D$ yields
    \[
    \frac{S}{D}\sum_{d=1}^D T_d
    -\frac{S(S-1)}{D(D-1)}\sum_{d=1}^D \binom{T_d}{2}
    \;\le\;
    \bbE_{\r,\bm\epsilon}\|\J\z\|_0
    \;\le\;
    \frac{S}{D}\sum_{d=1}^D T_d.
    \]
    Noting that $\sum_{d=1}^D T_d = \|\J\|_0$, we obtain
    \[
    \|\J\|_0
    -\frac{(S-1)}{(D-1)}\sum_{d=1}^D \binom{T_d}{2}
    \;\le\;
    \frac{D}{S}\bbE_{\r,\bm\epsilon}\|\J\z\|_0
    \;\le\;
    \|\J\|_0.
    \]

    Finally, $T_d\le T:=\max_d T_d$ implies
    \[
    \binom{T_d}{2}=\frac{T_d(T_d-1)}{2}\le \frac{T-1}{2}\,T_d,
    \]
    so
    \[
    \sum_d \binom{T_d}{2}\le \frac{T-1}{2}\sum_d T_d
    = \frac{T-1}{2}\|\J\|_0,
    \]
    which gives the claimed sandwich bound.
    \end{proof}

{
\section{Alternative Identifiability Conditions Beyond Jacobian Sparsity}\label{app:alternative_assumptions}

Assumption~\ref{ass:struct_sparsity_main} (Jacobian sparsity) plays the role of an identifiability condition for the underlying nonlinear unmixing step (Step~1 of the proof of Theorem~\ref{thm:main_identifiability}). As with most identifiability theories, this assumption characterizes a meaningful regime in which the proposed criterion is provably valid; it is not intended to cover every possible data-generating process. We argue below that (i) the assumption is well-motivated for many practical DT settings, and (ii) the proposed identifiability framework is modular, in the sense that it can accommodate other unmixing identifiability conditions when sparsity does not hold.

Assumption~\ref{ass:struct_sparsity_main} make sense when the cross-domain feature interactions are sparse or local in nature. Concrete cases include: (a) geometric transformations such as rotations, translations, and warping (each output coordinate depends on a small neighborhood of input coordinates); (b) inverse problems such as super-resolution, denoising, and inpainting, where each output pixel/feature is influenced by a limited spatial neighborhood; (c) sparse data representations (e.g., gene-expression or single-cell modalities) where each output feature is driven by a small subset of latent factors; and (d) language transduction, where the target token typically depends on a limited window of source tokens. In all these cases, the Jacobian of the ground-truth transfer map can be reasonably modeled as having sparse row supports.

\paragraph{Alternative unmixing conditions.} When Jacobian sparsity does not hold, identifiability can still be obtained through alternative structural conditions on the mixing function. The nonlinear-unmixing literature has established several such conditions, including \emph{diverse influence}~\cite{nguyen2025diverse}, \emph{independent mechanisms}/\emph{orthogonal Jacobians}~\cite{gresele2021independent}, and other function-class-based conditions~\cite{buchholz2022function}. Each of these can replace Step~1 of our proof: the criterion in Eq.~\eqref{eq:udt_via_nica_main} would simply use the corresponding regularizer (e.g., a Jacobian-orthogonality penalty in place of $\|\J_{\g}\|_0$), with Steps~2 and 3 (distribution matching and the anchor argument) unchanged. A systematic study of these alternative conditions in the DT context is beyond the scope of this work, but our analytical framework is well-suited to incorporate them.}

{
\section{Detailed Derivation of the Sparsity Regularizer}\label{app:sparsity_derivation}

The sparsity regularizer used in our high-dimensional implementation,
\begin{equation}\label{eq:sp_recap}
\cL_{\rm sp}(\g_{\bm\theta}) = \bbE_{\x,\z}\left\|\frac{\g_{\bm\theta}(\x+\delta \z) - \g_{\bm\theta}(\x)}{\delta}\right\|_1,
\end{equation}
is a chain of three approximations of the exact Jacobian-$\ell_0$ objective $\bbE_{\x}\|\J_{\g_{\bm\theta}}(\x)\|_0$ that appears in Eq.~\eqref{eq:obj_l0}. We make each step explicit here for clarity.

\paragraph{Step 1: From $\|\J\|_0$ to a Jacobian-vector-product proxy.} Computing $\|\J\|_0$ in high dimensions requires forming the full $D \times D$ Jacobian, which is prohibitive. Proposition~\ref{thm:random-probe-l0} shows that for a sparse Gaussian probe $\z = \r \odot \bm\epsilon$ with mask sparsity $\|\r\|_0 = S$, the quantity $\q(\J) = (D/S)\bbE_{\z}\|\J\z\|_0$ sandwiches $\|\J\|_0$:
\begin{equation*}
\|\J\|_0 \;\ge\; \q(\J) \;\ge\; \left(1 - \tfrac{(S-1)(T-1)}{2(D-1)}\right)\|\J\|_0.
\end{equation*}
Therefore, $\q(\J)$ is a tight proxy whenever $(S-1)(T-1)\ll D$. Roughly, instead of evaluating the support of every column of $\J$, it is enough to evaluate the support of the action of $\J$ on a few sparse random probes. Replacing $\|\J\|_0$ by $\q(\J)$ in Eq.~\eqref{eq:obj_l0} produces an objective of the form $\bbE_{\x,\z}\|\J_{\g_{\bm\theta}}(\x)\z\|_0$ (up to a constant).

\paragraph{Step 2: From $\ell_0$ to $\ell_1$.} The $\ell_0$ pseudo-norm is non-differentiable and not amenable to gradient-based optimization. Following standard practice in sparse signal processing, we relax it to its convex surrogate $\ell_1$, yielding the objective $\bbE_{\x,\z}\|\J_{\g_{\bm\theta}}(\x)\z\|_1$.

\paragraph{Step 3: From Jacobian-vector products to finite differences.} For a smooth $\g_{\bm\theta}$ and small $\delta>0$, the directional derivative is well-approximated by a forward finite difference:
\begin{equation*}
\J_{\g_{\bm\theta}}(\x)\z \;=\; \lim_{\delta \to 0}\frac{\g_{\bm\theta}(\x+\delta\z) - \g_{\bm\theta}(\x)}{\delta} \;\approx\; \frac{\g_{\bm\theta}(\x+\delta\z) - \g_{\bm\theta}(\x)}{\delta}.
\end{equation*}
Substituting this into the previous step gives Eq.~\eqref{eq:sp_recap}.

Consequently, each evaluation of Eq.~\eqref{eq:sp_recap} requires only \emph{two forward passes} of $\g_{\bm\theta}$ per sample, regardless of $D$. With $S>1$, a few Monte-Carlo samples of $\z$ suffice to obtain a low-variance estimate (see Section~\ref{sec:ablation_study_jacobian_sparsity}), in contrast to the standard column-wise finite difference which requires $\cO(D)$ probes. Hence Eq.~\eqref{eq:sp_recap} is a faithful and scalable surrogate for $\bbE_{\x}\|\J_{\g_{\bm\theta}}(\x)\|_0$.}

\section{Experiments}

\subsection{Setting}
\subsubsection{2D Synthetic Experiment}
Hyperparameter settings for 2D synthetic experiment is detailed in Table~\ref{tab:synthetic_experiment_settings}.
\begin{table}[h]
    \centering
    \caption{Hyperparameter settings for 2D synthetic experiments.}
    \label{tab:synthetic_experiment_settings}
    \begin{tabular}{lc}
    \toprule
    \textbf{Parameter} & \textbf{Value} \\
    \midrule
    \multicolumn{2}{l}{\textit{Optimization}} \\
    Learning rate & $10^{-3}$ \\
    Batch size & 1024 \\
    Training iterations & 7,000 \\
    \midrule
    \multicolumn{2}{l}{\textit{Data}} \\
    Number of training samples & 27,000 \\
    Number of test samples & 3,000 \\
    \midrule
    \multicolumn{2}{l}{\textit{Loss Weights}} \\
    $\lambda_{\text{inv}}$ & 1.0 \\
    $\lambda_{\text{sp}}$ & 0.1 \\
    $\lambda_{\text{anch}}$ & 1.0 \\
    $|\mathcal{E}|$ & 1 \\
    \bottomrule
    \end{tabular}
\end{table}

\paragraph{Neural Network Architecture.}
The translation network $\g_{\bm\theta}$ and reconstruction network $\f_{\alpha}$ are implemented as 2-layer multi-layer perceptrons (MLPs), each with 32 hidden units per layer and LeakyReLU activations (negative slope 0.2).
The discriminator is also a 2-layer MLP but with 64 hidden units per layer and LeakyReLU activations.
All networks are optimized using Adam with a learning rate of $10^{-3}$.

\subsubsection{MNIST to rotated MNIST Experiment}
Hyperparameter settings for MNIST to rotated MNIST experiment is detailed in Table~\ref{tab:mnist_to_rotated_mnist_experiment_settings}.

\begin{table}[t]
    \centering
    \caption{Hyperparameter settings for MNIST experiments.}
    \label{tab:mnist_to_rotated_mnist_experiment_settings}
    \begin{tabular}{lc}
    \toprule
    \textbf{Parameter} & \textbf{Value} \\
    \midrule
    \multicolumn{2}{l}{\textit{Optimization}} \\
    Learning rate & $10^{-4}$ \\
    Optimizer & Adam ($\beta_1=0.0$, $\beta_2=0.99$) \\
    Weight decay & $10^{-5}$ \\
    Batch size & 16 \\
    Training iterations & 50,000 \\
    \midrule
    \multicolumn{2}{l}{\textit{Loss Weights}} \\
    $\lambda_{\text{inv}}$ & 1.0 \\
    $\lambda_{\text{sp}}$ & 0.1 \\
    $\lambda_{\text{anch}}$ & 1.0 \\
    $|\mathcal{E}|$ & 1 \\
    \midrule
    \multicolumn{2}{l}{\textit{Jacobian Regularization}} \\
    Method &  Masked Finite-difference (Eq.~\eqref{eq:masked_finite_diff}) \\
    Number of $\z$ samples for approximating $\bbE_{\z} \|\J_{\g_{\bm \theta}}(\x) \z \|_{1}$ & 8 \\
    $S$ & 102 \\
    $\delta$ & 0.01 \\
    \bottomrule
    \end{tabular}
    \end{table}

    \paragraph{Neural Network Architecture.}
    We use fully connected networks for both the translation network $\g_{\bm \theta}$ and discriminator ${\bm d}_{\bm \psi}$. The translation network is implemented as a two-layer MLP that first flattens the $32 \times 32$ input image into a vector, applies a linear transformation to a 1024-dimensional hidden layer with ReLU activation, followed by another linear layer with Tanh activation, and finally reshapes the output to a $32 \times 32$ image. The discriminator follows a similar architecture: flattening the input, applying a linear layer to 1024 hidden units with LeakyReLU activation (negative slope 0.2), and a final linear layer producing a single scalar output for real/fake classification. We use the non-saturating GAN loss.

    \subsubsection{Edges to rotated shoes Experiment}
    Hyperparameter settings for Edges to rotated shoes experiment is detailed in Table~\ref{tab:e2s_hyperparameters}.

    \begin{table}[t]
        \centering
        \caption{Hyperparameter settings for Edges2Shoes experiments.}
        \label{tab:e2s_hyperparameters}
        \begin{tabular}{lc}
            \toprule
            \textbf{Parameter} & \textbf{Value} \\
            \midrule
            \multicolumn{2}{l}{\textit{Optimization}} \\
            Learning rate & $10^{-4}$ \\
            Optimizer & Adam ($\beta_1=0.0$, $\beta_2=0.99$) \\
            Weight decay & $10^{-5}$ \\
            Batch size & 16 \\
            Training iterations & 100,000 \\
            \midrule
            \multicolumn{2}{l}{\textit{Loss Weights}} \\
            $\lambda_{\text{inv}}$ & 10.0 \\
            $\lambda_{\text{anch}}$  & 10.0 \\
            $\lambda_{\text{sp}}$ & 0.001 \\
            $|\mathcal{E}|$ & 10 \\
            \midrule
            \multicolumn{2}{l}{\textit{Jacobian Regularization}} \\
            Method &  Masked Finite-difference (Eq.~\eqref{eq:masked_finite_diff}) \\
            Number of $\z$ samples for approximating $\bbE_{\z} \|\J_{\g_{\bm \theta}}(\x) \z \|_{1}$ & 16 \\
            $S$ & 20 \\
            $\delta$ & 0.01 \\
            \bottomrule
        \end{tabular}
    \end{table}

    \paragraph{Neural Network Architecture.}
    Input images are resized to $128 \times 128$ and are encoded by the frozen VAE encoder of Stable Diffusion \cite{rombach2022high} into $16 \times 16 \times 4$ for training efficiency. The translation network uses a StarGAN-style encoder-decoder architecture~\cite{choi2020starganv2} built from residual blocks, where the encoder consists of an initial convolution and 2 downsampling residual blocks, and the decoder contains two upsampling residual blocks with AdaIN (Adaptive Instance Normalization) layers.
    {As discussed in Section~\ref{sec:image_to_image_translation_main}, we replace standard instance normalization with channel-only normalization to keep the Jacobian sparsity regularization compatible with the network.}
    The discriminator is a convolutional network that mirrors the encoder with residual blocks for downsampling and outputs a single scalar for real/fake classification. To stabilize GAN training, we apply R1 regularization \cite{mescheder2018training} to the discriminator network with a weight of 3. During evaluation, the translation is performed in latent space, and the result is decoded to image space using the frozen VAE decoder. The reconstruction network $\f_{\alpha}$ shares the same architecture as the translation network.

\subsection{Ablation Study}\label{sec:ablation_study}

    We conduct an ablation study on the MNIST to rotated MNIST task to understand the contribution of each component in our proposed method. We compare four configurations by enabling/disabling Jacobian sparsity regularization ($\lambda_{\rm sp}$) and anchor matching regularization ($\lambda_{\rm anch}$).
    \begin{itemize}\itemsep=0pt\parskip=0pt
        \item Proposed: $\lambda_{\rm sp} = 0.01, \lambda_{\rm anch} = 1$,
        \item Case I: $\lambda_{\rm sp} = 0, \lambda_{\rm anch} = 0$,
        \item Case II: $\lambda_{\rm sp} = 0.01, \lambda_{\rm anch} = 0$,
        \item Case III: $\lambda_{\rm sp} = 0, \lambda_{\rm anch} = 1$.
    \end{itemize}

    \textbf{Results:} Table~\ref{tab:ablation} shows the TE attained in various cases. It shows that only using anchor matching and sparsity regularization jointly with distribution matching results in acceptable TE. Fig.~\ref{fig:ablation_study} shows the translation attained in various cases. It is clear that both regularizations are necessary to obtain content-aligned DT.

    \begin{table}[t]
        \centering
            \caption{Ablation study on MNIST $\to$ rotated MNIST. Translation error (TE) measures content alignment; lower is better.}
            \begin{tabular}{lcc}
            \toprule
            Configuration & TE$(\downarrow)$ \\
            \midrule
            Distr. Matching (Case I) &  $0.50 \pm 0.19$ \\
            + Sparsity reg. (Case II) & $0.63 \pm 0.19$ \\
            + Anchor (Case III) &  $0.65 \pm 0.12$ \\
            + Anchor + Sparsity reg. (Proposed)  & $\mathbf{0.20 \pm 0.06}$ \\
            \bottomrule
            \end{tabular}
        \label{tab:ablation}
    \end{table}

    \begin{figure}[t]
        \centering
        \includegraphics[width=0.3\linewidth]{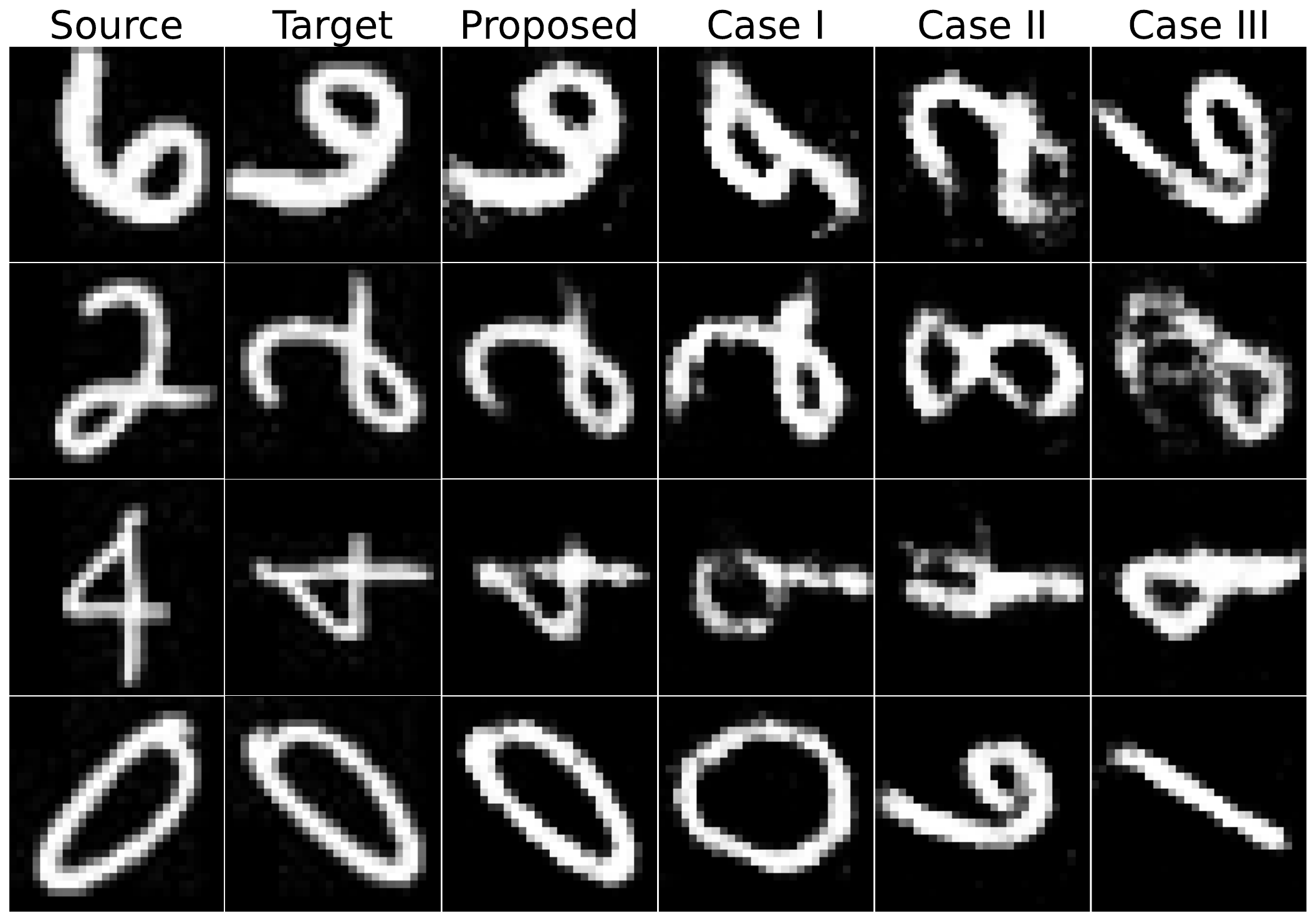}
        \caption{Ablation study on MNIST $\to$ rotated MNIST. Case I: only distribution matching, Case II: sparsity regularization + distribution matching, Case III: anchor alignment + distribution matching, Case IV: anchor alignment + sparsity regularization + distribution matching.}
        \label{fig:ablation_study}
    \end{figure}

{\subsection{Effect of the Number of Anchors}
\label{app:single_vs_multi_anchor}
Table~\ref{tab:anchor_count_e2s} reports LPIPS on Edges to rotated Shoes as we vary the number of paired anchors $|\cE|\in\{1,2,5,10\}$. Performance improves monotonically with more anchors, with the largest gain when going from 5 to 10. This is consistent with the discussion in Section~5.2.2: small numbers of anchors can be insufficient under model mismatch and imperfect optimization in this high-dimensional regime, but a modest set of anchors recovers strong content alignment.

\begin{table}[h]
    \centering
    \caption{LPIPS on Edges to rotated Shoes as a function of the number of paired anchors $|\cE|$.}
    \label{tab:anchor_count_e2s}
    \begin{tabular}{cc}
    \toprule
    \# Aligned samples $|\cE|$ & LPIPS$(\downarrow)$ \\
    \midrule
    1  & $0.400 \pm 0.09$ \\
    2  & $0.390 \pm 0.10$ \\
    5  & $0.379 \pm 0.09$ \\
    10 & $\mathbf{0.320 \pm 0.07}$ \\
    \bottomrule
    \end{tabular}
\end{table}

 \subsection{Sensitivity to Anchor Choice}\label{sec:anchor_sensitivity}
{A natural concern raised during reviewing is that, with a single anchor, the learned mapping may be overly sensitive to the particular anchor pair drawn. To test this, we run the proposed method on the MNIST to rotated MNIST task across 5 independent trials, each using a different randomly sampled anchor pair $(\x^{(\ell)},\y^{(\ell)})$. To control runtime, each trial is trained for 20{,}000 iterations rather than the 50{,}000 used elsewhere. We report the average translation error (TE) on the test set across trials.

The mean test TE across the 5 trials is $0.367$, with standard deviation $0.094$. The variation across anchors is therefore limited, indicating that the proposed criterion is not strongly sensitive to the specific choice of anchor sample within reasonable variability.}

\subsection{Jacobian Sparsity Regularization}\label{sec:ablation_study_jacobian_sparsity}
\textbf{Variance Reduction Effect}
Since the proposed masked finite-difference based regularizer in Eq.~\eqref{eq:masked_finite_diff} takes into consideration $S$ columns of Jacobian at once, it is expected to have lower variance than column-wise finite-difference based regularizer, i.e., when $S = 1$. We can also observe this fact in Fig.~\ref{fig:masked_finite_difference_variance}. Here we generate 20 Jacobian matrices $\J$ of dimensions $D\times D$ with $D=1000$ and $T=10$. The non-zero entries of $\J$ are sampled from $\cN(0,1)$.
We plot the bias and variance of $\frac{D}{S} \|\J \z \|_{0}$ as a function of $S$. We use 500 Monte Carlo samples of $\z$ in order to estimate the variance. The plot shows the average variance and relative bias of the estimator for all Jacobians. As expected the variance seems to decrease exponentially with increasing $S$, whereas the bias increases linearly with $S$. Hence it is desirable to use a moderate $S$ for the masked finite-difference based regularizer.

\begin{figure}[t]
    \centering
    \includegraphics[width=0.4\linewidth]{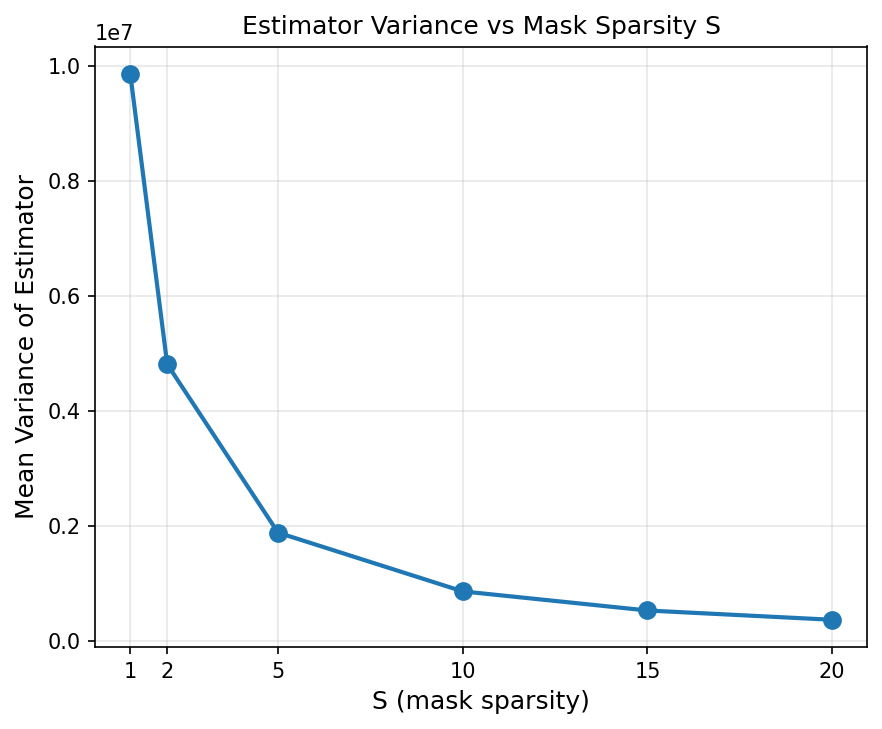}
    \quad
    \includegraphics[width=0.4\linewidth]{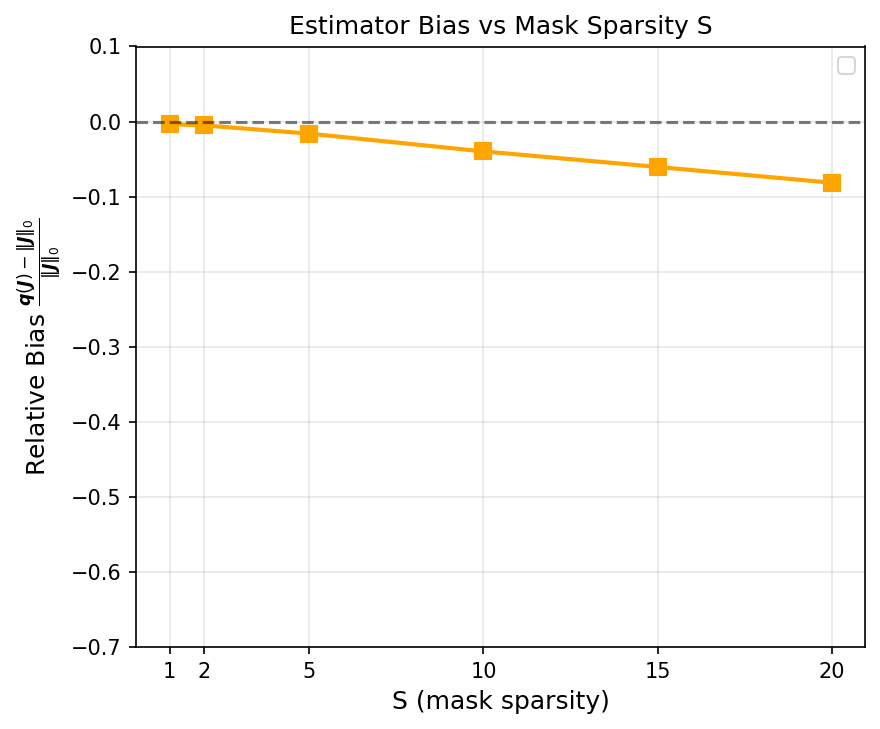}
    \caption{Variance (a) and bias (b) of $\frac{D}{S}\|\J \z \|_{0}$ as a function of $S$. Note that $S=1$ corresponds to the column-wise finite-difference based regularizer in~\eqref{eq:jacobian_l0_approx_masked}.}
    \label{fig:masked_finite_difference_variance}
\end{figure}

\begin{figure}[t]
    \centering
    \includegraphics[width=0.25\linewidth]{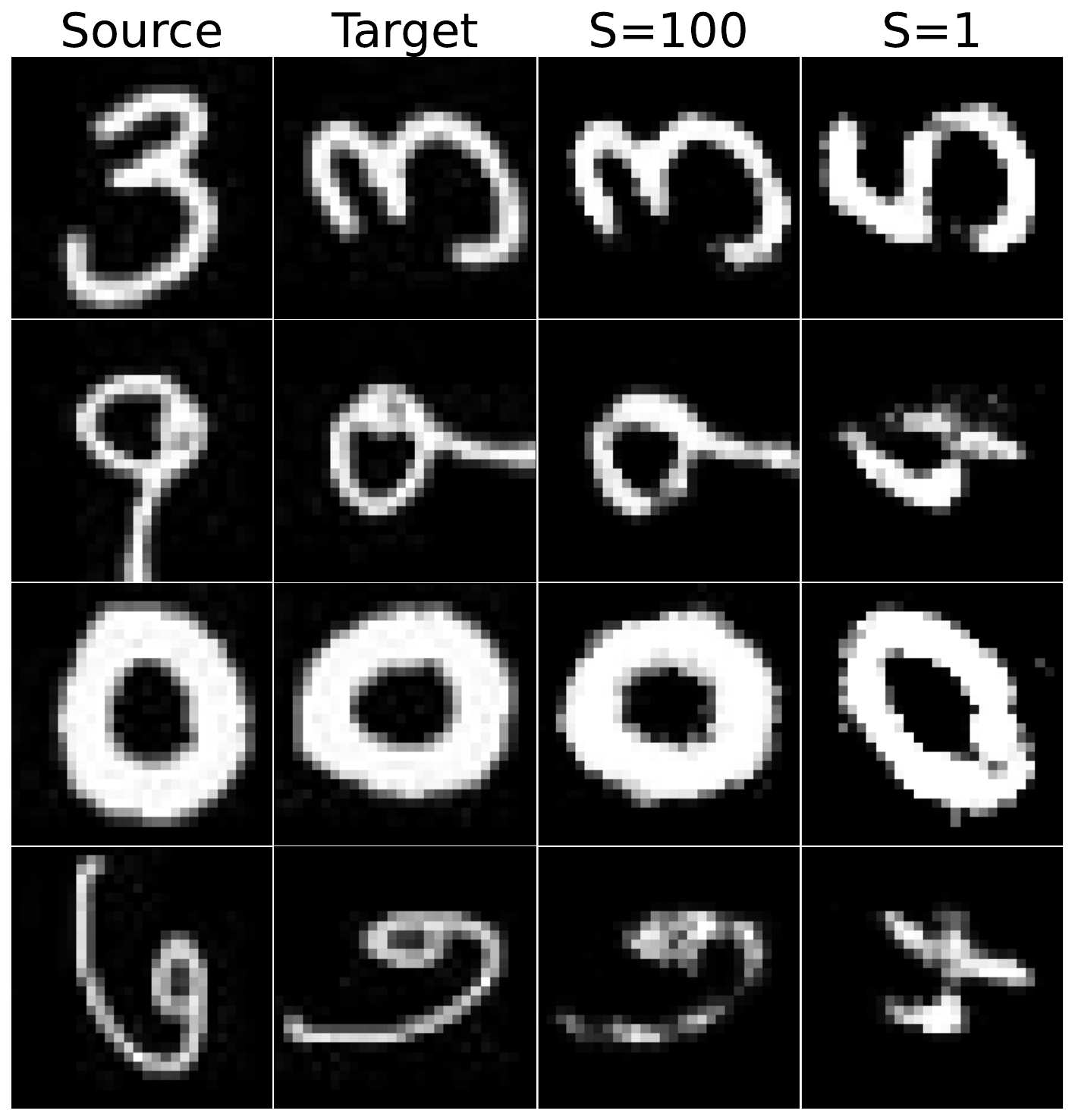}
    \caption{MNIST to Rotated MNIST using finite-masked difference with $S=1$ and $S=100$.}
    \label{fig:mnist_s_effect}
\end{figure}

\textbf{Effect of $S$ on MNIST to rotated MNIST}
Fig.~\ref{fig:mnist_s_effect} shows the effect of using $S=1$ and $S=100$ for MNIST to  rotated MNIST dataset. One can see that when $S=1$, which corresponds to column-wise sampling of the Jacobian, results in misaligned translation, whereas using $S=100$ is significantly better at alignment.

\end{document}

%% file: def.tex
\usepackage{steinmetz}

\newcommand{\e}{\boldsymbol{e}}
\newcommand{\f}{\boldsymbol{f}}

\newcommand{\g}{\boldsymbol{g}}
\newcommand{\h}{\boldsymbol{h}}
\newcommand{\m}{\boldsymbol{m}}

\renewcommand{\d}{\boldsymbol{d}}
\newcommand{\q}{\boldsymbol{q}}
\renewcommand{\r}{\boldsymbol{r}}

\newcommand{\y}{\boldsymbol{y}}
\newcommand{\x}{\boldsymbol{x}}
\newcommand{\z}{\boldsymbol{z}}

\newcommand{\X}{\boldsymbol{X}}

\newcommand{\J}{\boldsymbol{J}}
\newcommand{\A}{\boldsymbol{A}}

\newcommand{\V}{\boldsymbol{V}}

\newcommand{\cE}{\mathcal{E}}

\newcommand{\cI}{\mathcal{I}}

\newcommand{\cL}{\mathcal{L}}

\newcommand{\cN}{\mathcal{N}}
\newcommand{\cO}{\mathcal{O}}

\newcommand{\cR}{\mathcal{R}}

\newcommand{\cV}{\mathcal{V}}

\newcommand{\cX}{\mathcal{X}}
\newcommand{\cY}{\mathcal{Y}}

\newcommand{\bbR}{\mathbb{R}}
\newcommand{\bbE}{\mathbb{E}}

\usepackage{extarrows}

\usepackage{xcolor}
\usepackage{psfrag,framed}
\usepackage{lipsum}
\usepackage{chapterbib}
\PassOptionsToPackage{normalem}{ulem}
\usepackage{ulem}
\definecolor{shadecolor}{RGB}{220,220,220}

\newcommand{\bm}{\boldsymbol}